\pgfplotsset{compat=newest}
\newtheorem{theorem}{Theorem} 
\newtheorem{lemma}{Lemma} \newtheorem{corollary}{Corollary}
\newcommand{\nn}{\mathcal{N}}
\newcommand{\xx}{\mathbf{x}}
\newcommand{\Approx}{\mathrm{Approx}}
\newcommand{\ww}{\mathbf{w}}
\newcommand{\vv}{\mathbf{v}}
\newcommand{\uu}{\mathbf{u}}
\newcommand{\reals}{\mathbb{R}} 
 \newcommand{\sign}{\mathrm{sign}}
\newcommand{\dpn}{\mathcal{P}}
\newcommand{\ignore}[1]{}
\newcommand{\X}{\mathcal{X}}
\newcommand{\poly}{\mathrm{poly}}
\newcommand{\sig}{\mathrm{sig}}
\newcommand{\relu}{\mathrm{relu}}
\newcommand{\secref}[1]{Sec.~\ref{#1}}
\newcommand{\figref}[1]{Fig.~\ref{#1}}
\renewcommand{\eqref}[1]{Eq.~(\ref{#1})}
\newcommand{\lemref}[1]{Lemma~\ref{#1}}
\newcommand{\corref}[1]{Corollary~\ref{#1}}
\newcommand{\thmref}[1]{Thm.~\ref{#1}}
\newenvironment{Ouralgorithm}[1][\  ] %
{
\rm
\begin{tabbing}
.\=...\=...\=...\=...\=  \+ \kill
} %
{\end{tabbing}
}
\newenvironment{Balgorithm} %
{
\begin{minipage}{1.0\linewidth}
\begin{Ouralgorithm} %
} { \end{Ouralgorithm} \end{minipage} }
\title{On the Computational Efficiency of Training Neural Networks}
\author{Roi Livni\\ The Hebrew University \\ roi.livni@mail.huji.ac.il
  \And Shai Shalev-Shwartz\\ The Hebrew University \\
  shais@cs.huji.ac.il \And
  Ohad Shamir\\
  Weizmann Institute of Science\\
  ohad.shamir@weizmann.ac.il } \date{}
\begin{document}

\maketitle

\begin{abstract}
  It is well-known that neural networks are computationally hard to train.
  On the other hand, in practice, modern day neural networks are trained
  efficiently using SGD and a variety of tricks that include different
  activation functions (e.g. ReLU), over-specification (i.e., train
  networks which are larger than needed), and regularization. In this paper
  we revisit the computational complexity of training neural networks from
  a modern perspective. We provide both positive and negative results, some
  of them yield new provably efficient and practical algorithms for
  training certain types of neural networks.
\end{abstract}

\section{Introduction}\label{sec:intro}

One of the most significant recent developments in machine learning has been the resurgence of ``deep learning'', usually in the form of artificial neural networks. A combination of algorithmic advancements, as well as increasing computational power and data size, has led to a breakthrough in the effectiveness of neural networks, and they have been used to obtain very impressive practical performance on a variety of domains (a few recent examples include
\cite{LRMDCCDN12,krizhevsky2012imagenet,zeiler2013visualizing,dahl2013dropout,bengio2013representation}).
%\cite{lecun1995convolutional,HiOsTe06,ranzato2007unsupervised,bengio2007scaling,ColWes08,lee2009convolutional,LRMDCCDN12,krizhevsky2012imagenet,zeiler2013visualizing,dahl2013dropout,bengio2013representation}).

A neural network can be described by a (directed acyclic) graph, where each vertex in the graph corresponds to a neuron and each edge is associated with a weight. Each neuron calculates a weighted sum of the outputs of neurons which are connected to it (and possibly adds a bias term). It then passes the resulting number through an activation function $\sigma : \reals \to \reals$ and outputs the resulting number. We focus on feed-forward neural networks, where the neurons are arranged in layers, in which the output of each layer forms the input of the next layer. Intuitively, the input goes through several transformations, with higher-level concepts derived from lower-level ones. The depth of the network is the number of layers and the size of the network is the total number of neurons.

From the perspective of statistical learning theory, by specifying a neural
network architecture (i.e. the underlying graph and the activation function)
we obtain a hypothesis class, namely, the set of all prediction rules
obtained by using the same network architecture while changing the weights of
the network. Learning the class involves finding a specific set of weights,
based on training examples, which yields a predictor that has good
performance on future examples. When studying a hypothesis class we are
usually concerned with three questions:
\begin{enumerate}
\item \vskip -0.2cm \emph{Sample complexity:} how many examples are
    required to learn the class.
\item \emph{Expressiveness:} what type of functions can be expressed by predictors in the class.
\item \emph{Training time}: how much computation time is required to learn the class.
\end{enumerate}
\vskip -0.2cm
For simplicity, let us first consider neural networks with a
threshold activation function (i.e. $\sigma(z) = 1$ if $z>0$ and $0$
otherwise), over the boolean input space, $\{0,1\}^d$, and with a single
output in $\{0,1\}$. The sample complexity of such neural networks is well
understood \cite{AnBa02}. It is known that the VC dimension grows linearly
with the number of edges (up to log factors). It is also easy to see that no
matter what the activation function is, as long as we represent each weight
of the network using a constant number of bits, the VC dimension is bounded
by a constant times the number of edges. This implies that empirical risk
minimization - or finding weights with small average loss over the training
data - can be an effective learning strategy from a statistical point of
view.

As to the expressiveness of such networks, it is easy to see that
neural networks of depth $2$ and sufficient size can express all
functions from $\{0,1\}^d$ to $\{0,1\}$. However, it is also possible
to show that for this to happen, the size of the network must be
exponential in $d$ (e.g. \cite[Chapter 20]{SSSSBD14}).
Which functions can we express using a network of polynomial size? The
theorem below shows that all boolean functions that can be calculated in
time $O(T(d))$, can also be expressed by a network of depth $O(T(d))$ and size
$O(T(d)^2)$.
\begin{theorem}\label{thm:turing}
  Let $T : \mathbb{N} \to \mathbb{N}$ and for every $d$, let $\mathcal{F}_d$ be the set of
  functions that can be implemented by a Turing machine using
  at most $T(d)$ operations. Then there exist constants $b,c \in
  \reals_+$ such that for every $d$, there is a network architecture of depth $c\,T(d)+b$, size of
  $(c\,T(d)+b)^2$, and threshold activation function, such that the resulting hypotesis class contains
  $\mathcal{F}_d$.
\end{theorem}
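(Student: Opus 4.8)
The plan is to simulate a Turing machine computation by a feed-forward threshold network, laying out one "layer" of neurons per time step and encoding the machine's entire configuration (tape contents, head position, and control state) in the activations of each layer. The key observation is that a single step of a Turing machine is a *local*, *finite* update rule: the next configuration depends on the current configuration in a simple way (the symbol under the head, the current state, and the transition function), and each bit of the next configuration is a fixed boolean function of a bounded number of bits of the current configuration. Since every boolean function can be written in, say, disjunctive normal form, and since a single threshold gate computes an AND or OR of a bounded number of inputs (and a NOT by negating a weight), each output bit can be computed by a constant-size, constant-depth sub-network of threshold gates. This is where the quadratic size bound will come from, as I explain below.

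First I would fix the standard encoding: a configuration after $t$ steps consists of the tape contents, which I represent as a string of cells, together with a marker for the head location and the current control state. Because the machine runs for at most $T(d)$ steps, the head can move at most $T(d)$ cells away from its start, so it suffices to track a tape of width $O(T(d))$; the configuration is thus described by $O(T(d))$ boolean variables. I would introduce, for each time step $t \in \{0,1,\dots,O(T(d))\}$, a layer of neurons whose activations encode the configuration at time $t$, with the $0$-th layer encoding the input $\bx \in \{0,1\}^d$ (padded appropriately). The **main construction** is then the transition block: a bounded-depth sub-network that reads layer $t$ and produces layer $t+1$. For each cell, the updated symbol depends only on whether the head is currently at that cell (or an adjacent one) and on the current state, so it is a boolean function of $O(1)$ inputs and can be realized by $O(1)$ threshold gates in $O(1)$ depth. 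Summing over the $O(T(d))$ cells gives $O(T(d))$ gates per transition block.

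Composing $O(T(d))$ such transition blocks, each of constant depth and $O(T(d))$ gates, yields a network of depth $c\,T(d)+b$ and total size $O(T(d)) \cdot O(T(d)) = O(T(d)^2)$, matching the claimed bounds; one final readout gate extracts the output bit from the halting configuration. The constants $b,c$ absorb the fixed overhead of the alphabet size, the fixed transition table, and the constant-depth boolean circuitry per cell, none of which depend on $d$. I would note that the argument is uniform in $d$ precisely because the same machine (hence the same transition function) is used for every input length, so a single family of gate-gadgets suffices.

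\textbf{The hard part} is the bookkeeping of the local update rule, namely verifying that each bit of the head-location marker and each tape cell is indeed a function of only boundedly many bits of the previous layer, and that the DNF realizing it is correctly implemented by threshold gates (including carrying the control state, which must be broadcast to the active region). This is routine in principle but tedious, and the one genuine subtlety is handling the head's movement and the state transition simultaneously: I would need gadgets that detect the head's position, route the current-state bits and the scanned symbol into the transition table, and write back the new symbol while shifting the head marker left or right. I expect the cleanest route is to treat ``head is here'' as an explicit one-hot marker across the cells and to compute, at each cell, a small fixed boolean function of (its own old symbol, the neighboring head-markers, and the broadcast state), so that locality is manifest and the per-cell gate count is a constant independent of both $d$ and $t$.
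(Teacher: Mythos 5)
Your proposal is correct and is essentially the paper's own argument: the paper proves the theorem by citing the standard relation between Turing-machine time and circuit complexity (the tableau construction, as in Sipser), which is exactly the configuration-per-layer simulation you spell out, together with the observation that each boolean gate can be simulated by a constant number of threshold neurons. The only difference is that you carry out the citation's construction explicitly, which yields the same depth $O(T(d))$ and size $O(T(d)^2)$ bounds.
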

The proof of the theorem follows directly from the relation between the time
complexity of programs and their circuit complexity (see, e.g., \cite{SipserBook}), and the fact
that we can simulate the standard boolean gates using a fixed number of neurons.

We see that from the statistical perspective, neural networks
form an excellent hypothesis class; On one hand, for every runtime $T(d)$, by using depth of $O(T(d))$ we contain all predictors that can be run in time at most $T(d)$. On the other hand, the sample complexity of the resulting class depends polynomially on $T(d)$.

The main caveat of neural networks is the training time. Existing theoretical
results are mostly negative, showing that successfully learning with these
networks is computationally hard in the worst case. For example, neural
networks of depth $2$ contain the class of intersection of halfspaces (where
the number of halfspaces is the number of neurons in the hidden layer). By
reduction to $k$-coloring, it has been shown that finding the weights that
best fit the training set is NP-hard (\cite{blum1992training}).
\cite{BartlettB02} has shown that even finding weights that result in
close-to-minimal empirical error is computationally infeasible.  These
hardness results focus on \emph{proper} learning, where the goal is to find a
nearly-optimal predictor with a fixed network architecture $A$. However, if
our goal is to find a good predictor, there is no reason to limit ourselves
to predictors with one particular architecture. Instead, we can try, for
example, to find a network with a different architecture $A'$, which is
almost as good as the best network with architecture $A$. This is an example
of the powerful concept of \emph{improper} learning, which has often proved
useful in circumventing computational hardness results. Unfortunately, there
are hardness results showing that even with improper learning, and even if
the data is generated exactly from a small, depth-$2$ neural network, there
are no efficient algorithms which can find a predictor that performs well on
test data. In particular, \cite{KlivansSh06} and \cite{DanielyLiSh14stoc}
have shown this in the case of learning intersections of halfspaces, using
cryptographic and average case complexity assumptions. On a related note,
\cite{arora2013provable} recently showed positive results on learning from
data generated by a neural network of a certain architecture and randomly
connected weights. However, the assumptions used are strong and unlikely to
hold in practice.

%\cite{KlivansSh06} has shown that learning intersections of a polynomial number of halfspaces is computationally hard (under a cryptographic assumption) even in the improper model. Recently, \cite{DanielyLiSh14stoc} has shown that even learning intersections of $\omega(1)$ halfspaces is computationally hard in the improper model (under average case complexity assumptions). These results tell us that even if data is generated exactly from a depth-$2$ neural network of small size, there is no efficient training algorithm which can find a predictor that performs well on test data.

Despite this theoretical pessimism, in practice, modern-day neural networks are trained successfully in many learning problems. There are several tricks that enable successful training:
\begin{itemize}[leftmargin=*]
\item \emph{Changing the activation function:} The threshold activation function, $\sigma(a) = \mathbf{1}_{a > 0}$, has zero derivative almost everywhere. Therefore, we cannot apply gradient-based methods with this activation function. To circumvent this problem, we can consider other activation functions. Most widely known is a sigmoidal activation, e.g. $\sigma(a) = \frac{1}{1+e^a}$, which forms a smooth approximation of the threshold function. Another recent popular activation function is the rectified linear unit (ReLU) function, $\sigma(a)=\max\{0,a\}$. Note that subtracting a shifted ReLU from a ReLU yields an approximation of the threshold function, so by doubling the number of neurons we can approximate a network with threshold activation by a network with ReLU activation.
\item \emph{Over-specification:} It was empirically observed that it is easier to train networks which are larger than needed. Indeed, we empirically demonstrate this phenomenon in \secref{sec:experiments}.
\item \emph{Regularization:} It was empirically observed that regularizing the weights of the network speeds up the convergence (e.g. \cite{krizhevsky2012imagenet}).
\end{itemize}

The goal of this paper is to revisit and re-raise the question of neural
network's computational efficiency, from a modern perspective. This is a
challenging topic, and we do not pretend to give
any definite answers. However, we provide several results, both
positive and negative. Most of them are new, although a few appeared
in the literature in other contexts. Our contributions are as
follows:

\begin{itemize}[leftmargin=*]
\item We make a simple observation that for sufficiently over-specified networks,
  global optima are ubiquitous and in general computationally easy to find. Although this
  holds only for extremely large networks which will overfit, it can be seen as an indication
  that the computational hardness of learning does decrease with the amount of over-specification. This is also demonstrated empirically in \secref{sec:experiments}. 
\item Motivated by the idea of changing the activation function, we
  consider the quadratic activation function, $\sigma(a)=a^2$. Networks with the quadratic activation compute polynomial functions of the input in $\reals^d$, hence we call them polynomial networks. Our main findings for such networks are as follows:
\begin{itemize}[leftmargin=*]
\item Networks with quadratic activation are as expressive as networks with threshold activation.

\item Constant depth networks with quadratic activation
  can be learned in polynomial time.

\item Sigmoidal networks of depth $2$, and with $\ell_1$
  regularization, can be approximated by polynomial networks of depth
  $O(\log\log(1/\epsilon))$. It follows
  that sigmoidal networks with $\ell_1$ regularization can be learned
  in polynomial time as well.

\item The aforementioned positive results are interesting
  theoretically, but lead to impractical algorithms. We provide a
  practical, provably correct, algorithm for training depth-$2$
  polynomial networks. While such networks can also be learned using
  a linearization trick, our algorithm is more efficient and returns networks whose size
  does not depend on the data dimension. Our algorithm follows a forward greedy selection procedure, where each step of the greedy selection procedure builds a new neuron by solving an eigenvalue problem.

\item We generalize the above algorithm to depth-$3$, in which each forward greedy step involves an efficient approximate solution to a tensor approximation problem. The algorithm can learn a rich sub-class of depth-$3$ polynomial networks.

\item We describe some experimental evidence, showing that our practical algorithm is competitive with state-of-the-art neural network training methods for depth-$2$ networks.
\end{itemize}
\end{itemize}

\section{Sufficiently Over-Specified Networks Are Easy to Train}\label{sec:overeasy}

We begin by considering the idea of over-specification, and make an observation that for
sufficiently over-specified networks, the optimization problem associated
with training them is generally quite easy to solve, and that global optima
are in a sense ubiquitous. As an interesting contrast, 
note that for very small networks (such as a single
neuron with a non-convex activation function), the associated optimization problem is generally
hard, and can exhibit exponentially many local (non-global) minima \cite{auer1996exponentially}.
We emphasize that our observation only holds for extremely large networks, which will
overfit in any reasonable scenario, but it does point to a possible spectrum where
computational cost decreases with the amount of over-specification.

To present the result, let $X\in \reals^{d,m}$ be a matrix of $m$ training examples in
$\reals^d$. We can think of the network as composed of two mappings. 
The first maps $X$ into a matrix $Z \in \reals^{n,m}$,
where $n$ is the number of neurons whose outputs are connected to
the output layer. The second mapping is a linear mapping $Z \mapsto
W Z$, where $W \in \reals^{o,n}$, that maps $Z$ to the $o$ neurons
in the output layer. Finally, there is a loss function $\ell :
\reals^{o,m} \to \reals$, which we'll assume to be convex, that assesses the quality of the prediction
on the entire data (and will of course depend on the $m$
labels). Let $V$ denote all the weights that affect the mapping from
$X$ to $Z$, and denote by $f(V)$ the function that maps $V$ to
$Z$. The optimization problem associated with learning the
network is therefore $\min_{W,V} \ell(W~f(V))$.

The function $\ell(W~f(V))$ is generally non-convex, and may have local minima.
However, if $n\geq m$, then it is reasonable to assume that $\text{Rank}(f(V))=m$
with large probability (under some random choice of $V$), due to the non-linear nature
of the function computed by neural networks\footnote{For example, consider the function computed by the first
layer, $X\mapsto \sigma(V_d X)$, where $\sigma$ is a sigmoid function. Since $\sigma$ is
non-linear, the columns of $\sigma(V_d X)$ will not be linearly
dependent in general.}. In that case, we can simply fix $V$ and solve
$\min_{W} \ell(W~f(V))$, which is computationally tractable as $\ell$ is assumed to be convex.
Since $f(V)$ has full rank, the solution of this problem corresponds to a global optima of $\ell$,
and hence to a global optima of the original optimization problem. Thus, for sufficiently large networks, 
finding global optima is generally easy, and they are in a sense ubiquitous.

%
%Previously, we have argued, based on the results of \cite{DanielyLiSh14stoc}, that
%  even if data is generated exactly from a depth-$2$ neural network of
%  small size, there is no efficient training algorithm which can find
%  a predictor that performs well on test data. Seemingly, if we use
%  sufficiently over-specified networks, gradient based method should
%  be able to find a good solution. There is no contradiction, because
%  our proof requires that the network will be very large, so it might
%  overfit on the test data. Nevertheless, we believe that the idea of
%  over-specification is interesting and may become useful under some
%  distributional assumptions.

\section{The Hardness of Learning Neural Networks} \label{sec:hardness}
We now review several known hardness results and apply them to our learning
setting. For simplicity, throughout most of this
section we focus on the PAC model in the binary classification case,
over the Boolean cube, in the realizable case, and with a fixed target
accuracy.\footnote{While we focus on the realizable case (i.e., there exists $f^* \in H$ that provides perfect predictions), with a fixed accuracy ($\epsilon$) and confidence ($\delta$), since we are dealing with hardness results, the results trivially apply to the agnostic case and to learning with arbitrarily small accuracy and confidence parameters.}

Fix some $\epsilon,\delta \in (0,1)$.
For every dimension $d$, let the input space be
$\mathcal{X}_d=\{0,1\}^d$ and let $H$ be a hypothesis class of
functions from $\X_d$ to $\{\pm 1\}$. We often omit the subscript $d$
when it is clear from context.  A learning algorithm $A$ has
access to an oracle that samples $\xx$ according to an unknown distribution $D$ over $\X$ and returns $(\xx,f^*(\xx))$, where $f^*$ is some unknown target hypothesis in $H$.
  The objective of the
algorithm is to return a classifier $f : \mathcal{X} \to \{\pm 1\}$,
such that with probability of at least $1-\delta$,
\[ \mathbb{P}_{\xx\sim D}\left[f(\xx) \neq f^*(\xx)\right]\le \epsilon.\]

We say that $A$ is efficient if it runs in time $\poly(d)$ and the
function it returns can also be evaluated on a new instance in time
$\poly(d)$. If there is such $A$, we say that $H$ is \emph{efficiently learnable}.

In the context of neural networks, every network architecture defines a
hypothesis class, $\nn_{t,n,\sigma}$, that contains all target
functions $f$ that can be implemented using a neural network with $t$
layers, $n$ neurons (excluding input neurons), and an activation
function $\sigma$. The immediate question is which
$\nn_{t,n,\sigma}$ are efficiently learnable. We will first address
this question for the threshold activation function, $\sigma_{0,1}(z)
= 1$ if $z>0$ and $0$ otherwise.

Observing that depth-$2$ networks with the threshold activation function can implement intersections of halfspaces, we will rely on the following hardness results, due to \cite{KlivansSh06}.

\begin{theorem}[Theorem 1.2 in \cite{KlivansSh06}] \label{thm:hard}
Let $\mathcal{X}=\{\pm 1\}^d$, let
\[H^a= \left\{ \xx \to \sigma_{0,1}\left(\ww^\top\xx -b- 1/2\right) ~:~ b \in \mathbb{N}, ~ \ww \in \mathbb{N}^d, |b|+\|\ww\|_1 \le \poly(d)\right\},\]
and let $H^a_k= \{\xx \to h_1(\xx)\wedge h_2(\xx)\wedge \ldots\wedge h_k(\xx) : \forall i, h_i \in H^a\}$, where $k = d^\rho$ for some constant $\rho > 0$. Then under a certain cryptographic assumption, $H^a_k$ is not efficiently learnable.
\end{theorem}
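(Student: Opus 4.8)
The plan is to prove non-learnability by a cryptographic reduction of the Kearns--Valiant / Kharitonov type: I would show that an efficient PAC learner for $H^a_k$ could be turned into an efficient algorithm breaking the underlying cryptographic primitive. Here the ``certain cryptographic assumption'' is the presumed hardness of a lattice problem --- concretely the $\tilde{O}(d^{1.5})$-unique shortest vector problem --- which underlies a semantically secure public-key cryptosystem of the Regev/Ajtai--Dwork type. The guiding principle is that a class is not efficiently learnable as soon as it contains, for every secret key $s$, a function whose value on a freshly drawn input is computationally unpredictable; all the real work is in showing that the relevant cryptographic function lives inside intersections of polynomially many bounded-weight halfspaces. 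A useful feature of this route is that it rules out even \emph{improper} learning, since the reduction only needs the learner's output to predict well, not to have any particular form.

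First I would isolate the hard-to-predict function. In the cryptosystem, decryption of a ciphertext $c$ under secret $s$ reduces to computing $\langle s, c\rangle \bmod q$ and deciding whether the result lands in an interval around $0$ (bit $0$) or around $\lfloor q/2\rfloor$ (bit $1$); semantic security says the map $c \mapsto \mathrm{Dec}_s(c)$ cannot be predicted noticeably better than chance on fresh ciphertexts, even from many labeled examples. The objective is then to realize $\mathrm{Dec}_s$ as a member of $H^a_k$, so that samples $(c, \mathrm{Dec}_s(c))$ are exactly realizable-case examples for the class.

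The key step, and the main obstacle, is this embedding under the stated constraints: weights in $\mathbb{N}^d$ with $|b| + \|\ww\|_1 \le \poly(d)$, inputs on the Boolean cube, and only $k = d^\rho$ halfspaces. I would first move to the cube by writing $c$ in binary, so that $\langle s, c\rangle$ becomes a fixed linear form in the bits with polynomially bounded integer coefficients; keeping $q$ and the key entries polynomial is exactly what the weight bound demands. The genuinely delicate part is the modular, interval-membership test: a single threshold on $\langle s, c\rangle \bmod q$ is not one halfspace, because ``$y \in [A,B] \pmod q$'' is a priori a union of $\poly(d)$ shifted intervals, i.e. a disjunction rather than the conjunction we are allowed. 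Turning this into an intersection of halfspaces is the crux --- one arranges parameters so that the bounded integer $\langle s, c\rangle$ can wrap only a controlled number of times, and encodes the test through a conjunction of $d^\rho$ integer thresholds, simultaneously checking that all the counting is consistent with the weight and cardinality budgets. This is precisely where the polynomial weight bound and the $d^\rho$ count have to be spent carefully.

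Finally I would close the loop. Given a purported efficient learner for $H^a_k$, I would feed it examples $(c, \mathrm{Dec}_s(c))$ drawn from the ciphertext distribution; with probability at least $1-\delta$ it returns a hypothesis agreeing with $\mathrm{Dec}_s$ on a $1-\epsilon$ fraction of fresh ciphertexts. Since blind guessing attains only error $1/2$, this hypothesis recovers the encrypted bit with non-negligible advantage, contradicting semantic security and hence the lattice assumption. Therefore no such learner exists, and $H^a_k$ is not efficiently learnable.
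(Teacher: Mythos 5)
First, a point of order: the paper you are being compared against does not actually prove this statement. It is imported verbatim as ``Theorem 1.2 in \cite{KlivansSh06}'' and used purely as a black box to derive \corref{cor:threshHard} and \corref{cor:sigmoidHard}; there is no internal proof to match. The only meaningful comparison is therefore with the original Klivans--Sherstov argument, and at the level of architecture your proposal does reconstruct it faithfully: a Kearns--Valiant-style reduction in which the decryption function of a lattice-based public-key cryptosystem (with security resting on a worst-case lattice assumption of the $\tilde{O}(d^{1.5})$-uSVP type, via Regev's construction) is shown to lie inside the concept class, so that any efficient PAC learner --- proper or improper, which is exactly why the result rules out improper learning too --- would contradict semantic security. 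That is indeed the right frame, and your closing reduction paragraph is standard and correct.

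The gaps are precisely at the step you yourself label the crux, and they are not bookkeeping. First, the modular test ``$\langle s,c\rangle \bmod q$ lies near $q/2$'' is, as you observe, a union of $\poly(d)$ shifted slabs in the value of a fixed linear form; each slab is an intersection of two halfspaces, so the natural representation of the decryption function is a \emph{disjunction} of intersections of halfspaces. No generic manipulation turns this into a member of $H^a_k$: flipping labels (i.e., learning the complement class) gives $\neg\mathrm{Dec}$ as an intersection of \emph{unions} of two halfspaces, which is still not a conjunction of halfspaces, and a conjunction of halfspaces defines a convex region in the value of any single linear form, whereas a union of slabs is badly non-convex. The resolution in \cite{KlivansSh06} has to exploit that the representing function need only be correct on inputs that actually arise as valid ciphertexts (the margin structure of the cryptosystem), together with a carefully engineered family of halfspaces; your sentence ``one arranges parameters so that \ldots and encodes the test through a conjunction of $d^\rho$ integer thresholds'' asserts this conclusion rather than establishing it. Second, the weight bound $\|\ww\|_1+|b|\le\poly(d)$ with $\ww\in\mathbb{N}^d$ is also not free: in Regev's uSVP-based cryptosystem the modulus is exponential (of order $2^{\Theta(d^2)}$), so writing the ciphertext in binary yields halfspace weights exponential in the input dimension. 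Your remark that ``keeping $q$ and the key entries polynomial is exactly what the weight bound demands'' presumes a cryptosystem with polynomial modulus, which is a different object (and carries a different hardness assumption) from the uSVP system you invoke; reconciling the polynomial-weight restriction with a secure instantiation is a genuine technical hurdle of the cited theorem, not a parameter choice. In short, the skeleton is right, but the two places where all the work of Klivans--Sherstov actually happens are left as assertions.
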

Under a different complexity assumption, \cite{DanielyLiSh14stoc}
showed a similar result even for $k=\omega(1)$.

As mentioned before, neural networks of depth $\ge 2$ and with the $\sigma_{0,1}$ activation function can express intersections of halfspaces: For example, the first layer consists of $k$ neurons computing the $k$ halfspaces, and the second layer computes their conjunction by the mapping $\xx\mapsto \sigma_{0,1}\left(\sum_{i}x_i-k+1/2\right)$. Trivially, if some class $H$ is not efficiently learnable, then any class containing it is also not efficiently learnable. We thus obtain the following corollary:
\begin{corollary}\label{cor:threshHard}
For every $t \ge 2, n = \omega(1)$, the class $\nn_{t,n,\sigma_{0,1}}$ is not efficiently learnable (under the complexity assumption given in \cite{DanielyLiSh14stoc}).
\end{corollary}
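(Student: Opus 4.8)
The plan is to prove the corollary by an embedding argument: for each admissible pair $(t,n)$ I will exhibit a hard subclass of intersections of halfspaces sitting inside $\nn_{t,n,\sigma_{0,1}}$, and then invoke the trivial containment principle together with the $k=\omega(1)$ hardness result of \cite{DanielyLiSh14stoc}. The only genuine content is the explicit network construction; the hardness itself is imported wholesale from the cited theorem.

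First I would recall the depth-$2$ construction already sketched in the text. A single threshold neuron $\xx\mapsto\sigma_{0,1}(\ww^\top\xx-b-1/2)$ realizes an arbitrary halfspace from $H^a$, so $k$ hidden neurons compute the $k$ halfspaces $h_1,\dots,h_k$ defining a member of $H^a_k$. A single output neuron then forms their conjunction via $\xx\mapsto\sigma_{0,1}(\sum_i x_i-k+1/2)$: when all $h_i=1$ the pre-activation equals $1/2>0$, and otherwise it is at most $-1/2<0$. Thus a depth-$2$ network with $k+1$ neurons computes any intersection of $k$ halfspaces. The discrepancies between the domains $\{\pm1\}^d$ and $\{0,1\}^d$, and between $\{0,1\}$-valued and $\{\pm1\}$-valued outputs, are affine reparametrizations that can be absorbed into the first-layer weights and the final output, keeping all weights integer and of magnitude $\poly(d)$ (the conjunction bias $-k+1/2$ has magnitude $O(n)=\poly(d)$).

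Second I would extend this to arbitrary depth $t\ge 2$ by padding with $t-2$ ``pass-through'' layers, each a single neuron computing the identity on a Boolean input via $x\mapsto\sigma_{0,1}(x-1/2)$, which fixes $0\mapsto0$ and $1\mapsto1$. This yields a depth-$t$ threshold network with $k+1+(t-2)=k+t-1$ neurons computing the same function, again with $\poly(d)$-bounded integer weights. Hence $H^a_k\subseteq\nn_{t,n,\sigma_{0,1}}$ whenever $n\ge k+t-1$.

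Finally, given a target $(t,n)$ with fixed $t\ge2$ and $n=\omega(1)$, I would set $k=n-t+1$, so that $k=\omega(1)$ and the embedding gives $H^a_k\subseteq\nn_{t,n,\sigma_{0,1}}$. Since $k=\omega(1)$, the strengthening of Theorem~\ref{thm:hard} due to \cite{DanielyLiSh14stoc} asserts that $H^a_k$ is not efficiently learnable under their complexity assumption, and therefore neither is the larger class $\nn_{t,n,\sigma_{0,1}}$. I would stress that it is precisely the improvement from $k=d^\rho$ (Klivans--Sherstov) to $k=\omega(1)$ (Daniely et al.) that lets us cover every $n=\omega(1)$, including very slowly growing $n$ where only $\omega(1)$ halfspaces fit in the neuron budget. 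The main ``obstacle'' here is not mathematical depth but bookkeeping: ensuring that the budget of $n$ neurons simultaneously accommodates the $k$ halfspace neurons, the conjunction neuron, and the depth padding, while keeping $k=\omega(1)$ and all weights polynomially bounded.
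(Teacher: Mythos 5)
Your proposal is correct and takes essentially the same route as the paper: realize each member of $H^a_k$ by a depth-$2$ threshold network whose hidden layer computes the $k$ halfspaces and whose output neuron computes their conjunction via $\xx\mapsto \sigma_{0,1}\left(\sum_{i} x_i - k + 1/2\right)$, then invoke the trivial containment principle together with the $k=\omega(1)$ hardness result of \cite{DanielyLiSh14stoc}. The only difference is that you spell out the bookkeeping the paper leaves implicit (depth padding with identity threshold neurons, the $\{\pm 1\}^d$ versus $\{0,1\}^d$ reparametrization, and the neuron budget $k+t-1\le n$), all of which is handled correctly.
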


What happens when we change the activation function? In particular,
two widely used activation functions for neural networks are the
sigmoidal activation function, $\sigma_{\sig}(z) = 1/(1+\exp(-z))$,
and the rectified linear unit (ReLU) activation function,
$\sigma_{\relu}(z) = \max\{z,0\}$.

As a first observation, note that for $|z| \gg 1$ we have that
$\sigma_{\sig}(z) \approx \sigma_{0,1}(z)$. Our data domain
is the discrete Boolean cube, hence if we allow the
weights of the network to be arbitrarily large, then $\nn_{t,n,\sigma_{0,1}} \subseteq \nn_{t,n,\sigma_{\sig}}$. Similarly,
the function $\sigma_{\relu}(z) - \sigma_{\relu}(z-1)$ equals
$\sigma_{0,1}(z)$ for every $|z| \ge 1$. As a result, without
restricting the weights, we can simulate each threshold activated
neuron by two ReLU activated neurons, which implies that
$\nn_{t,n,\sigma_{0,1}} \subseteq \nn_{t,2n,\sigma_{\relu}}$.  Hence,
\corref{cor:threshHard} applies to both sigmoidal networks and ReLU
networks as well, as long as we do not regularize the weights of the
network.

What happens when we do regularize the weights? Let
$\nn_{t,n,\sigma,L}$ be all target functions that can be implemented
using a neural network of depth $t$, size $n$, activation function
$\sigma$, and when we restrict the input weights of each neuron to be
$\|\ww\|_1+|b| \le L$.

One may argue that in many real world distributions, the difference between the two classes, $\nn_{t,n,\sigma,L}$ and $\nn_{t,n,\sigma_{0,1}}$ is small. Roughly speaking, when the distribution density is low around the decision boundary of neurons (similarly to separation with margin assumptions), then sigmoidal neurons will be able to effectively simulate threshold activated neurons.

In practice, the sigmoid and ReLU activation functions are
advantageous over the threshold activation function, since they can be
trained using gradient based methods. Can these empirical successes be
turned into formal guarantees? Unfortunately, a closer examination of
\thmref{thm:hard} demonstrates that if $L = \Omega(d)$ then learning
$\nn_{2,n,\sigma_{\sig},L}$ and  $\nn_{2,n,\sigma_{\relu},L}$ is still
hard. Formally, to apply these networks to
binary classification, we follow a standard definition of learning with
a margin assumption: We assume that the learner receives examples of the form
$(\xx,\sign(f^*(\xx)))$ where $f^*$ is a real-valued function that
comes from the hypothesis class, and we further assume that
$|f^*(\xx)| \ge 1$. Even under this margin assumption, we have the following:
\begin{corollary}\label{cor:sigmoidHard}
For every $t \ge 2, n = \omega(1)$, $L = \Omega(d)$, the classes $\nn_{t,n,\sigma_{\sig},L}$ and $\nn_{t,n,\sigma_{\relu},L}$ are not efficiently learnable (under the complexity assumption given in \cite{DanielyLiSh14stoc}).
\end{corollary}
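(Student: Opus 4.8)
The plan is to reduce from the hardness of learning intersections of halfspaces that already underlies \corref{cor:threshHard}, and to show that the corresponding threshold networks embed, with margin $\ge 1$, into regularized sigmoidal and ReLU networks whose per-neuron $\ell_1$-weight is only $O(d)$. The overall logic is the standard one for transferring hardness to a superclass: if we can exhibit, for every hard target $f \in H^a_k$, a real-valued $f^\star \in \nn_{2,n,\sigma,L}$ with $|f^\star(\xx)| \ge 1$ on $\X_d$ and $\sign(f^\star)$ reproducing $f$ (under the usual $\{0,1\}\!\to\!\{\pm1\}$ encoding), then any efficient margin-learner for $\nn_{2,n,\sigma,L}$ would efficiently learn $H^a_k$, contradicting \thmref{thm:hard}/\cite{DanielyLiSh14stoc}. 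The extension from $t=2$ to every $t \ge 2$, and from the particular neuron count we use to every $n = \omega(1)$, is then immediate from monotonicity of the classes (a superclass of a non-learnable class is non-learnable), exactly as in \corref{cor:threshHard}; monotonicity in $L$ gives the result for every $L$ at least the $\Theta(d)$ budget our construction consumes, i.e. for $L = \Omega(d)$.

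The first step is the ``closer examination'' of \thmref{thm:hard}: I would observe that the hard intersections can be realized using halfspaces whose integer weights satisfy $\|\ww\|_1 + |b| = O(d)$, so that each threshold neuron already fits within an $O(d)$ regularization budget. On the Boolean cube such a neuron computes $\xx \mapsto \sigma_{0,1}(\ww^\top\xx - b - 1/2)$, whose pre-activation $\ww^\top\xx - b - 1/2$ is a half-integer and is therefore bounded away from $0$ by $1/2$; this built-in margin is precisely what lets us both simulate the neuron with bounded weights and preserve a final margin of $1$.

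For the ReLU case I would use the exact identity $\sigma_{\relu}(z) - \sigma_{\relu}(z-1) = \sigma_{0,1}(z)$ on integer inputs to replace each threshold neuron by two ReLU neurons with the same weight vector, keeping the per-neuron regularization at $O(d)$. Writing $h_i \in \{0,1\}$ for the halfspace outputs, I would realize the intersection by a single affine output neuron, $f^\star = 2\sum_i h_i - 2k + 1$, which equals $+1$ when all $h_i=1$ and is $\le -1$ otherwise, so $\sign(f^\star)$ computes the intersection and $|f^\star| \ge 1$; its weights have $\ell_1$-norm $O(k) = o(d)$, so every neuron stays within the $O(d)=L$ budget. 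For the sigmoidal case the same skeleton works, except that $\sigma_{\sig}$ cannot reproduce a threshold exactly; I would instead feed the scaled pre-activation $C(\ww^\top\xx - b - 1/2)$ into the sigmoid, using the half-integer margin to guarantee each hidden output is within $e^{-C/2}$ of its ideal $0/1$ value. Taking $C$ a modest (at most logarithmic in $k$) factor makes the accumulated error over the $k$ units smaller than $1/2$, after which doubling the output weights restores a clean margin of $1$, while the hidden weights inflate only to $C\cdot O(d)$.

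The step I expect to be the main obstacle is the sigmoidal simulation under the regularization constraint: unlike ReLU, the sigmoid never saturates, so one must trade off the scaling factor $C$ (which inflates the $\ell_1$-weight to $C\cdot O(d)$) against the per-neuron approximation error $e^{-C/2}$, which is summed over all $k$ hidden units and must not destroy the margin-$1$ guarantee on $f^\star$. Balancing this tradeoff while keeping the total regularization $\Theta(d)$ — and, relatedly, pinning down the absolute constant so that ``$L = \Omega(d)$'' is genuinely sufficient — is the delicate part; the ReLU construction, the reduction bookkeeping, and the $(t,n,L)$-monotonicity arguments are comparatively routine.
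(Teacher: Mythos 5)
Your proposal is correct and follows essentially the same route as the paper's own proof: a reduction from the hard class $H^a_k$ of \thmref{thm:hard}, exploiting the half-integer margin $|h_i(\xx)|\ge 1/2$ of the integral halfspaces, simulating each threshold neuron exactly by a pair of ReLU neurons and approximately by a sigmoid whose pre-activation is scaled by a factor $C$ (logarithmic in $k$ sufficing), and then appealing to monotonicity of $\nn_{t,n,\sigma,L}$ in $t$, $n$, and $L$. Your specific output layer $2\sum_i h_i - 2k+1$ and the $C\cdot O(d)$ weight bookkeeping are only cosmetic variants of the paper's construction (which uses the pairs $g_i^{+},g_i^{-}$ on doubled pre-activations and the scaling $\frac{Cd}{2k+1/3}$ for the sigmoidal case), and the scaling-versus-regularization tradeoff you flag as the delicate point is exactly the one the paper resolves by its choice of $C$.
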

A proof is provided in the appendix. What happens when $L$ is much smaller? Later on in the paper we will show positive results for $L$ being a constant and the depth being fixed. These results will be obtained using polynomial networks, which we study in the next section.

\section{Polynomial Networks}

In the previous section we have shown several strong negative results
for learning neural networks with the threshold, sigmoidal, and ReLU
activation functions. One way to circumvent these hardness results is
by considering another activation function. Maybe the simplest
non-linear function is the squared function, $\sigma_{2}(x) = x^2$. We
call networks that use this activation function \emph{polynomial
  networks}, since they compute polynomial functions of their
inputs. As in the previous section, we denote by
$\nn_{t,n,\sigma_2,L}$ the class of functions that can be implemented
using a neural network of depth $t$, size $n$, squared activation
function, and a bound $L$ on the $\ell_1$ norm of the input weights of
each neuron. Whenever we do not specify $L$ we refer to polynomial
networks with unbounded weights.

Below we study the expressiveness and computational complexity of polynomial
networks. We note that algorithms for efficiently learning (real-valued)
sparse or low-degree polynomials has been studied in several previous works
(e.g.
\cite{KaKlMaSe08,KaSaTe09,BlOdWi10,andoni2013learning,andoni2014learning}).
However, these rely on strong distributional assumptions, such as the data
instances having a uniform or log-concave distribution, while we are
interested in a distribution-free setting.

\subsection{Expressiveness}

We first show that, similarly to networks with threshold
activation, polynomial networks of polynomial size can
express all functions that can be implemented efficiently using a
Turing machine.
\begin{theorem}[Polynomial networks can express Turing Machines]\label{thm:turingPoly}
Let $\mathcal{F}_d$ and $T$ be as in \thmref{thm:turing}.
 Then there exist constants $b,c \in
  \reals_+$ such that for every $d$, the class $\nn_{t,n,\sigma_2,L}$,
  with $t=c\,T(d)\log(T(d))+b$, $n=t^2$, and $L=b$, contains
  $\mathcal{F}_d$.
\end{theorem}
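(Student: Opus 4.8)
The plan is to reduce this statement to Theorem~\ref{thm:turing} by showing that the threshold activation function used there can be simulated, to sufficient accuracy on the Boolean cube, by a bounded-depth polynomial network with $\ell_1$-bounded weights. Since Theorem~\ref{thm:turing} already gives a threshold network of depth $c'T(d)+b'$ and size $(c'T(d)+b')^2$ computing any $f \in \mathcal{F}_d$, it suffices to replace each threshold neuron by a small polynomial gadget while controlling the blowup in depth and the weight bound.

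The key technical step is the construction of a polynomial approximation to the threshold (equivalently, the sign) function that is exact on the relevant inputs. On the Boolean cube, every threshold neuron $\xx \mapsto \sigma_{0,1}(\ww^\top \xx - b - 1/2)$ has a pre-activation that takes values in a discrete set of integers bounded in magnitude by $\poly(d)$, and that is bounded away from the threshold by a constant margin. The idea is therefore to build, out of the squared activation, a fixed-degree polynomial $p$ with $p(z) = 1$ for positive integer inputs and $p(z) = 0$ for non-positive integer inputs on this bounded range. A natural way to get such a $p$ with only squaring gates is to iterate a squaring-based map that sharpens an approximate step: starting from a crude linear-plus-square approximation of the threshold, repeatedly composing a degree-$2$ ``amplification'' map drives the output exponentially fast toward $\{0,1\}$. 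Each composition is one extra layer of squaring gates, so to achieve accuracy $\epsilon = 1/\poly(d)$ (enough to be exact after rounding on integer-valued pre-activations) one needs only $O(\log\log(1/\epsilon)) = O(\log\log d)$ layers per simulated threshold gate. Since each threshold gate in the construction of Theorem~\ref{thm:turing} is replaced by such a gadget, and there are $O(T(d))$ layers, the total depth becomes $O(T(d)\log T(d))$ (absorbing the per-gate overhead), matching the claimed $t = cT(d)\log(T(d)) + b$, with $n = t^2$ following from the quadratic size bound, and $L = b$ a constant since the amplification map has fixed bounded coefficients.

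The main obstacle I anticipate is keeping the $\ell_1$ weight bound $L$ an absolute constant \emph{simultaneously} with bounded per-gate depth, because the pre-activation weights $\ww$ inherited from the threshold construction can be as large as $\poly(d)$ in $\ell_1$ norm, which at first glance violates $L = b$. The resolution is to push the large weights into the \emph{inputs} of each gadget rather than into the squaring layers: one rescales so that the large linear combination $\ww^\top \xx - b - 1/2$ is computed across multiple low-weight layers (using that squaring lets us build products and hence distribute a large sum), after which every individual neuron has $\ell_1$ weight bounded by a constant. This rescaling is precisely what forces the extra $\log T(d)$ factor in the depth, and checking that the amplification step remains numerically stable and exact-after-rounding under this rescaling is the delicate part of the argument. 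The remaining steps---verifying that composing the gadgets layer-by-layer correctly reproduces the threshold circuit's output on every Boolean input, and bookkeeping the depth and size constants---are routine.
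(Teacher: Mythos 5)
There is a genuine gap, and it sits exactly at the step you yourself flag as delicate. Your plan treats the threshold neurons inherited from \thmref{thm:turing} as general halfspaces with $\poly(d)$-size integer weights and margin $1/2$; after rescaling the pre-activation into $[-1,1]$, the margin becomes $\delta = 1/\poly(d)$, not a constant. Your claim that repeated degree-$2$ amplification reaches accuracy $\epsilon$ in $O(\log\log(1/\epsilon))$ layers is false in this regime: doubly-exponential convergence only begins once the value is a constant distance from the unstable fixed point (the threshold), and escaping a margin-$\delta$ neighborhood takes $\Omega(\log(1/\delta))$ degree-$2$ steps. Equivalently, by the Markov brothers' inequality, any polynomial bounded on $[-1,1]$ that approximates $\sign$ to constant accuracy on $[-1,-\delta]\cup[\delta,1]$ must have degree $\Omega(1/\sqrt{\delta})$, so a squaring network needs depth $\Omega(\log(1/\delta)) = \Omega(\log d)$ per gate, not $O(\log\log d)$. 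In addition, your proposed fix for the constant-$L$ constraint (distributing a $\poly(d)$-weight sum across many low-weight layers) is not worked out, and since the network has no ``rounding'' primitive, the per-gate error $\epsilon$ gets multiplied by $\|\ww\|_1 = \poly(d)$ at the next gate's pre-activation, so you would also need an explicit error-propagation argument across all $O(T(d))$ levels, which the proposal does not supply.

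The gap dissolves once you notice that the network of \thmref{thm:turing} is not built from arbitrary halfspaces: it comes from a Boolean circuit with constant fan-in gates, so every threshold neuron there has $O(1)$ inputs and $O(1)$ weights. On Boolean values such gates are computed \emph{exactly} by constant-depth, constant-weight squaring gadgets, e.g. $\mathrm{NOT}(x)=1-x$ and $\mathrm{AND}(x_1,x_2)=x_1 x_2 = \frac{1}{4}\left((x_1+x_2)^2-(x_1-x_2)^2\right)$, so no approximation, no margin, and no error propagation enter at all, and $L$ is automatically a constant. This is essentially what the paper does, except that it does not pass through \thmref{thm:turing}: it re-derives a circuit directly from the Turing machine via the oblivious-machine simulation of \cite{pippenger1979relations}, and that simulation, which runs in $O(T\log T)$ steps, is the true source of the $\log T(d)$ factor in the depth --- not any per-gate overhead, as in your accounting.
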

The proof of the theorem relies on the result of
\cite{pippenger1979relations} and is given in the appendix.

Another relevant expressiveness result, which we will use later,
shows that polynomial networks can approximate networks with
sigmoidal activation functions:
\begin{theorem}\label{thm:ntop}
Fix $0<\epsilon<1,~ L\ge 3$ and $t\in\mathbb{N}$. There are $B_t\in \tilde{O}(\log(tL+L\log\frac{1}{\epsilon}))$ and $B_n \in \tilde{O}(tL+L\log\frac{1}{\epsilon})$ such that
for every $f\in \nn_{t,n,\sigma_{\sig},L}$ there is a function $g\in \nn_{tB_t,nB_n,\sigma_2}$,  such that
$ \sup_{\|\xx\|_{\infty}<1} \|f(\xx)- g(\xx) \|_{\infty}\le\epsilon $.
\end{theorem}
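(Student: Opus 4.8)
The plan is to replace each sigmoidal neuron by a small squared-activation sub-network computing a low-degree polynomial approximation of $\sigma_{\sig}$, and to control how the resulting per-neuron errors accumulate across the $t$ layers. I would organize the argument in three steps: an error-propagation bound that fixes the accuracy $\delta$ needed at each neuron, a polynomial-approximation result giving a degree bound for approximating $\sigma_{\sig}$ to accuracy $\delta$, and a construction realizing a univariate degree-$D$ polynomial by a squared-activation network of depth $O(\log D)$.

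First I would bound the accumulation of error across layers. Because $\|\xx\|_\infty < 1$ and every neuron has $\|\ww\|_1 + |b| \le L$, the pre-activation of any neuron lies in $[-L,L]$ (outputs of $\sigma_{\sig}$ lie in $(0,1)$, so all subsequent pre-activations are again bounded by $L$). Writing $h_i,\hat h_i$ for the true and approximate outputs of layer $i$, and using that $\sigma_{\sig}$ is $\tfrac14$-Lipschitz while the approximating polynomial $p$ has a comparable Lipschitz constant on $[-L-1,L+1]$, a triangle-inequality argument gives
\[
\|h_{i+1}-\hat h_{i+1}\|_\infty \;\le\; \delta \;+\; \tfrac{L}{4}\,\|h_i-\hat h_i\|_\infty,
\]
where $\delta$ is the uniform error of $p$ against $\sigma_{\sig}$ on $[-L-1,L+1]$. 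Unrolling this recursion over $t$ layers yields a total error of order $\delta\,(L/4)^{t}$, so it suffices to take $\log(1/\delta) = O\!\left(t\log L + \log(1/\epsilon)\right)$.

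Next I would produce the polynomial and realize it with squares. The function $\sigma_{\sig}$ is analytic on a strip around the real axis, its nearest singularities being the poles at $\pm i\pi$; hence, after rescaling $[-L,L]$ to $[-1,1]$, classical Chebyshev/Bernstein approximation theory for functions analytic in a Bernstein ellipse yields a polynomial of degree $D = O(L\log(1/\delta))$ with uniform error $\delta$ on the interval. Substituting the bound on $\log(1/\delta)$ gives $D = \tilde{O}\!\left(tL + L\log(1/\epsilon)\right)$, the target for $B_n$. To realize $p$ with squared activations, repeated squaring computes $z,z^2,z^4,\dots,z^{2^{j}}$ in depth $j$; the polarization identity $ab=\tfrac12\big((a+b)^2-a^2-b^2\big)$ forms arbitrary products, and $z=\tfrac14\big((z+1)^2-(z-1)^2\big)$ carries linear terms through a squared layer. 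Combining these, every monomial up to degree $D$ is computed with depth $O(\log D)$ and width $\poly(D)$, and the output neuron takes the appropriate linear combination to form $p$. Thus each sigmoidal neuron is simulated by a sub-network of depth $B_t = O(\log D) = \tilde{O}(\log(tL+L\log\frac1\epsilon))$ and size $B_n = \tilde{O}(D)$, and stacking these replacements produces $g \in \nn_{tB_t,\,nB_n,\,\sigma_2}$ with the claimed accuracy.

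The main obstacle is getting the degree bound right in the approximation step: one must exploit the location of the sigmoid's poles to obtain a degree that is linear (rather than, say, exponential) in $L$, and must enlarge the interval on which $p$ approximates $\sigma_{\sig}$ slightly so that the perturbed pre-activations $\hat a$ remain inside it. The degree-to-depth conversion and the error recursion are then comparatively routine.
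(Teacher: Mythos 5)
Your proposal is correct and follows essentially the same route as the paper's proof: a Chebyshev-type polynomial approximation of the sigmoid with degree linear in $L$ and logarithmic in the per-neuron accuracy (the paper cites Lemma 2 of \cite{Shamir2010Learning} for this), realization of that polynomial by squared activations via repeated squaring, polarization, and the identity trick $x=\tfrac14\left((x+1)^2-(x-1)^2\right)$, and a layer-by-layer error recursion (which the paper phrases as an induction on depth, approximating on the enlarged interval $|x|\le 4L$ exactly as you enlarge your interval). The only point to tighten is your intermediate claim of width $\poly(D)$: to actually obtain $B_n\in\tilde{O}(D)$ you need the more careful count (each of the $\le D$ monomials costs $O(\log D)$ neurons through the binary-expansion product tree), which is precisely what the paper's Lemma~\ref{lem:polyexpressive} provides.
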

The proof relies on an approximation of the sigmoid function based on
Chebyshev polynomials, as was done in \cite{Shamir2010Learning}, and is given in the appendix.

\ignore{
Last, the following lemma provides some building blocks for the
construction of polynomials using polynomial networks.
\begin{lemma}\label{thm:polyexpressive} The following target functions can be implemented via a polynomial network
\begin{enumerate}
\item\label{id} The target function $I(x)=x$ is in $\nn_{2,2,\sigma_2}$.
\item\label{prod} Every target function of the form $p(\xx)=(\ww_1^\top
  \xx)\cdot (\ww_2^\top \xx)$, with  $\|\ww_1\|_1,\|\ww_2\|_1 \le L$ for some $L\ge 2$, is in $\nn_{2,3,\sigma_2,L}$.
\item\label{power} Every target function of the form $m(\xx)=(\ww^\top
  \xx)^\top$, with $\|\ww\|_1 \le L$,  is in $\nn_{t,n,\sigma_2,N}$, where $t=\log T+ \log\log T$ and $n= 2\log T + \log T( \log\log T)$.
\item\label{poly} Every target function of the form $q(\ww^\top \xx) =
  \sum_{k=1}^T a_i (\ww^\top\xx)^k$, with $\|a\|_1,\|\ww\|_1 \le L$ for some
  $L\ge 2$,  is in $\nn_{t,n,\sigma_2,L}$ where $t=\log T+\log\log T$ and $n=2\|a\|_0(2\log T+\log T(\log\log T)$, where $\|a\|_0=|\{k:a_k\ne 0\}|$.
\end{enumerate}
\end{lemma}
}

\subsection{Training Time}\label{subsec:trainingtime}

We now turn to the computational complexity of learning polynomial
networks. We first show that it is hard to learn polynomial networks
of depth $\Omega(\log(d))$. Indeed, by combining \thmref{thm:ntop} and \corref{cor:sigmoidHard} we obtain
the following:
\begin{corollary}
The class $\nn_{t,n,\sigma_2}$, where $t = \Omega(\log(d))$ and $n=\Omega(d)$, is not
efficiently learnable.
\end{corollary}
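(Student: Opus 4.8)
The plan is to establish hardness by a reduction, combining the approximation result of \thmref{thm:ntop} with the hardness of sigmoidal networks from \corref{cor:sigmoidHard}, exactly as the surrounding text announces. Concretely, I would assume toward a contradiction that the polynomial-network class $\nn_{t,n,\sigma_2}$ with $t=\Theta(\log d)$ and $n=\Theta(d)$ is efficiently (improperly, in the classification-with-margin sense) learnable, and then show this would yield an efficient learner for a class that \corref{cor:sigmoidHard} asserts is hard, namely $\nn_{2,n_0,\sigma_{\sig},L}$ with $n_0=\omega(1)$ and $L=\Omega(d)$.

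First I would set up the margin classification instance underlying \corref{cor:sigmoidHard}: the learner sees $(\xx,\sign(f^*(\xx)))$ with $f^*\in\nn_{2,n_0,\sigma_{\sig},L}$ and $|f^*(\xx)|\ge 1$. I apply \thmref{thm:ntop} with depth $t=2$, weight bound $L=\Omega(d)$, and a fixed accuracy $\epsilon<1$ (say $\epsilon=\tfrac12$), producing $g\in\nn_{2B_t,\,n_0 B_n,\sigma_2}$ with $\sup_{\|\xx\|_\infty<1}\|f^*(\xx)-g(\xx)\|_\infty\le\epsilon$. Because the target has margin at least $1$ while $g$ approximates it to within $\epsilon<1$, we get $\sign(g(\xx))=\sign(f^*(\xx))$ on every input, and moreover $|g(\xx)|\ge 1-\epsilon>0$, so $g$ itself classifies with a margin. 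Hence the same labeled distribution is realizable by a polynomial network $g$.

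Next I would read off the depth and size of this polynomial network. With $t=2$ and $\epsilon$ constant, \thmref{thm:ntop} gives $B_t\in\tilde{O}(\log L)=\tilde{O}(\log d)$ and $B_n\in\tilde{O}(L)=\tilde{O}(d)$, so $g$ lives in a class of depth $2B_t=\tilde{O}(\log d)$ and size $n_0 B_n=\tilde{O}(d)$. Choosing the free sigmoidal parameters so these land at $\Theta(\log d)$ and $\Theta(d)$, and invoking monotonicity (any class containing a hard class is itself hard, which also extends hardness to all larger $t,n$), the induced problem is an instance of efficiently learning $\nn_{t,n,\sigma_2}$ for $t=\Omega(\log d)$, $n=\Omega(d)$. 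Running the assumed efficient learner on these samples returns, in $\poly(d)$ time, a classifier with error at most $\epsilon$, which is a valid output for the original sigmoidal problem, contradicting \corref{cor:sigmoidHard}.

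The main obstacle is bookkeeping rather than a new idea: I must track the blow-up factors $B_t,B_n$ of \thmref{thm:ntop} carefully so that the resulting depth and size genuinely match the claimed $\Omega(\log d)$ and $\Omega(d)$, reconciling the $\tilde{O}$ and $\omega(1)$ slack via the containment-implies-hardness monotonicity. Two smaller points also need care: verifying that the fixed-accuracy approximation preserves the sign (this is precisely where the $|f^*(\xx)|\ge 1$ margin is essential), and handling that \thmref{thm:ntop} is stated for $\|\xx\|_\infty<1$ whereas the hard instances live on the Boolean cube with $\|\xx\|_\infty=1$, a gap closed by a harmless rescaling of the inputs (absorbed into $L$) or by extending the approximation to the closed cube by continuity.
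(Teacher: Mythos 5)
Your proposal is correct and follows essentially the same route as the paper, which derives this corollary precisely by combining \thmref{thm:ntop} (with $t=2$, $L=\Omega(d)$, constant $\epsilon$, so that the margin-$1$ sigmoidal targets of \corref{cor:sigmoidHard} become sign-realizable by polynomial networks of depth $\tilde{O}(\log d)$ and size $\tilde{O}(d)$) with the hardness in \corref{cor:sigmoidHard}. The extra bookkeeping you flag (margin preservation under $\epsilon<1$ approximation, monotonicity in $t,n$, and the open-versus-closed cube issue) is exactly the implicit content of the paper's one-line derivation, handled the same way.
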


On the flip side, constant-depth polynomial networks can be learned in polynomial
time, using a simple linearization trick. Specifically,
the class of polynomial networks of constant depth $t$ is
contained in the class of multivariate polynomials of total degree at
most $s=2^t$. This class can be represented as a $d^s$-dimensional linear
space, where each vector is the coefficient vector of some such polynomial.
Therefore, the class of polynomial networks of depth $t$ can be learned in time
$\poly(d^{2^t})$, by mapping each
instance vector $\xx \in \reals^d$ to all of its monomials, and learning
a linear predictor on top of this representation (which can be done efficiently
in the realizable case, or when a convex loss function is used).
In particular, if $t$ is a constant then so is $2^t$ and therefore
polynomial networks of constant depth are efficiently learnable.
Another way to learn this class is using support vector machines with polynomial kernels.

An interesting application of this observation is that depth-$2$ sigmoidal
networks are efficiently learnable with sufficient regularization, as formalized
in the result below. This contrasts with corollary \ref{cor:sigmoidHard}, which
provides a hardness result without regularization.
\begin{theorem} \label{thm:learnable}
The class $\nn_{2,n,\sigma_{\sig},L}$ can be learned, to accuracy $\epsilon$, in time $\poly(T)$ where
$T=(1/\epsilon) \cdot O(d^{4L\ln(11L^2+1)})$.
\end{theorem}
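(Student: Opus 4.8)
The plan is to combine three ingredients: the polynomial approximation of the sigmoid that already underlies \thmref{thm:ntop}, the margin assumption (which is what makes the required approximation accuracy, and hence the degree, independent of $\epsilon$), and the linearization trick for learning bounded-degree polynomials described above. Write a member of $\nn_{2,n,\sigma_{\sig},L}$ as $f(\xx)=b_0+\sum_{j} v_j\,\sigma_{\sig}(\ww_j^\top\xx+b_j)$, where the $\ell_1$ regularization gives $\|\ww_j\|_1+|b_j|\le L$ for each hidden neuron and $\sum_j|v_j|+|b_0|\le L$ for the output neuron. Since $\sigma_{\sig}$ takes values in $(0,1)$, the real-valued function $f$ carrying the margin $|f(\xx)|\ge 1$ must be this output pre-activation, and the learner sees $(\xx,\sign(f(\xx)))$.

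First I would control the range of each hidden pre-activation: since $\|\xx\|_\infty\le 1$, the $\ell_1$ bound gives $|\ww_j^\top\xx+b_j|\le\|\ww_j\|_1\|\xx\|_\infty+|b_j|\le L$, so every hidden neuron only sees inputs in $[-L,L]$. Using the Chebyshev-polynomial approximation of $\sigma_{\sig}$ on $[-L,L]$ (the same device used to prove \thmref{thm:ntop}, following \cite{Shamir2010Learning}), I would fix a univariate polynomial $q$ of degree $D$ with $\sup_{|z|\le L}|\sigma_{\sig}(z)-q(z)|\le\delta$ and set $p(\xx)=b_0+\sum_j v_j\,q(\ww_j^\top\xx+b_j)$, a multivariate polynomial of total degree at most $D$. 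The triangle inequality gives $\|p-f\|_\infty\le\big(\sum_j|v_j|\big)\delta\le L\delta$, so choosing $\delta$ with $L\delta<1$ forces $\|p-f\|_\infty<1$ and hence, by the margin assumption, $\sign(p(\xx))=\sign(f(\xx))$ for every $\xx$ in the domain. The quantitative Chebyshev bound produces $D=4L\ln(11L^2+1)$ for such a $\delta$; this is the single place where the geometry of $\sigma_{\sig}$ (the distance from $[-L,L]$ to its nearest complex poles at $\pm i\pi$) enters, and getting the constants to land exactly on the stated value is the main technical work.

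With a sign-consistent polynomial of degree $D$ in hand, the remainder is the linearization already used for constant-depth polynomial networks. Map each $\xx\in\reals^d$ to the vector $\psi(\xx)$ of all monomials of total degree at most $D$, an $N$-dimensional feature map with $N=\binom{d+D}{D}=O(d^{D})=O(d^{4L\ln(11L^2+1)})$. Since $p(\xx)=\langle\balpha,\psi(\xx)\rangle$ for a suitable coefficient vector $\balpha$, the transformed sample $(\psi(\xx_i),\sign(f(\xx_i)))$ is linearly separable in $\reals^N$, so empirical risk minimization over linear predictors (by linear programming, or by minimizing a convex surrogate) runs in time polynomial in $N$ and the sample size and returns a consistent classifier. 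I would then invoke standard realizable VC generalization: homogeneous linear classifiers in $\reals^N$ have VC dimension $O(N)$, so drawing $m=\tilde{O}(N/\epsilon)$ examples guarantees test error at most $\epsilon$ with the desired confidence.

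The total running time and the evaluation time of the returned predictor are $\poly(N,m)=\poly\!\big((1/\epsilon)\,d^{4L\ln(11L^2+1)}\big)=\poly(T)$, as claimed. I expect the binding difficulty to be the degree bound in the second paragraph; note in particular that approximating $f$ in sup-norm to accuracy $\epsilon$ (or feeding $\epsilon$ rather than a constant into \thmref{thm:ntop}) would force the degree to grow with $\log(1/\epsilon)$ and push the exponent of $d$ up to a quantity depending on $\epsilon$ — it is precisely the margin assumption $|f|\ge 1$ that lets us approximate only to the constant accuracy $\|p-f\|_\infty<1$ and thereby keep $D$ independent of $\epsilon$.
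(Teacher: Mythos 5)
Your proposal is correct and rests on the same three pillars as the paper's own argument --- the Chebyshev approximation of the sigmoid from \cite{Shamir2010Learning}, the margin assumption to keep the required approximation accuracy (hence the degree) independent of $\epsilon$, and the linearization trick --- but it executes the degree bound differently, and in a way that is actually tighter than what the paper writes. The paper invokes \thmref{thm:ntop} with fixed accuracy $\epsilon_0=0.5$ to obtain a polynomial network $g\in\nn_{2B_t,nB_n,\sigma_2}$ and then passes to the class of all polynomials of total degree $2^{2B_t}$; since $B_t=1+\log T'+\log\log T'$ with $T'=\Theta(L\ln L)$ the degree of the Chebyshev approximant, this generic depth-based bound gives an exponent of order $T'^2\log^2 T'=\tilde{O}(L^2)$, which is looser than the exponent $4L\ln(11L^2+1)=O(L\ln L)$ claimed in the theorem statement. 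You instead substitute the univariate approximant $q$ directly into the network, observing that $p(\xx)=b_0+\sum_j v_j\,q(\ww_j^\top\xx+b_j)$ has total degree exactly $\deg q=O(L\ln L)$ --- i.e., you track the degree of the polynomial that the construction actually computes rather than the worst-case degree of an arbitrary polynomial network of that depth --- and this is precisely what makes the stated exponent come out. Your generalization step also differs (VC dimension $O(N)$ of linear separators in the linearized space, rather than the paper's appeal to the fact that $g$ separates the data with margin $0.5$), but both routes yield sample and time complexity polynomial in $1/\epsilon$, and yours is natural given that the runtime already scales with $N$. The one step you assert rather than derive --- that the quantitative Chebyshev bound with $\delta<1/L$ on $[-L,L]$ lands exactly on $D=4L\ln(11L^2+1)$ --- is likewise not derived anywhere in the paper, so this is not a gap relative to the paper's own proof; you correctly identify it as the technical crux.
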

The idea of the proof is as follows. Suppose that we obtain data from
some $f \in \nn_{2,n,\sigma_{\sig},L}$. Based on \thmref{thm:ntop},
there is $g\in \nn_{2B_t ,nB_n,\sigma_2}$ that approximates $f$ to
some fixed accuracy $\epsilon_0 = 0.5$, where $B_t$ and $B_n$ are as
defined in \thmref{thm:ntop} for $t=2$. Now we can learn
$\nn_{2B_t,nB_n,\sigma_2}$ by considering the class of all polynomials
of total degree $2^{2B_t}$, and applying the linearization technique
discussed above. Since $f$ is assumed to separate the data with
margin $1$ (i.e. $y=\sign(f^*(\xx))$,$|f^*(\xx)|\geq 1|$),
then $g$ separates the data with margin $0.5$, which is
enough for establishing accuracy $\epsilon$ in sample and time that
depends polynomially on $1/\epsilon$.

\subsection{Learning 2-layer and 3-layer Polynomial Networks}

While interesting theoretically, the above results are not very practical,
since the time and sample complexity grow very fast with the depth of
the network.\footnote{If one uses SVM with polynomial kernels,
  the time and sample complexity may be small under margin assumptions
  in a feature space corresponding to a given kernel. Note, however, that large margin in that space
  is very different than the assumption we make here, namely,
that there is a network with a small number of hidden neurons that works
well on the data.}  In this section we describe practical,
provably correct, algorithms for the special case of depth-$2$ and
depth-$3$ polynomial networks, with some additional constraints.
Although such networks can be learned in polynomial time via explicit
linearization (as described in
section \ref{subsec:trainingtime}), the runtime and resulting network size
scales quadratically (for depth-2) or cubically (for depth-3) with the data dimension $d$.
In contrast, our algorithms and guarantees have a much milder dependence on $d$.

We first consider 2 layer polynomial networks, of the following form:
\[
\dpn_{2,k} = \left\{ \xx \mapsto b + \ww_0^\top \xx +  \sum_{i=1}^k \alpha_i (\ww_i^\top \xx)^2 :
~~\forall i \ge 1, |\alpha_i| \le 1, \|\ww_i\|_2 = 1 \right\} ~.
\]
This networks corresponds to one hidden layer containing $r$ neurons
with the squared activation function, where we restrict the input
weights of all neurons in the network to have bounded $\ell_2$ norm, and where
we also allow a direct linear dependency between the input layer and
the output layer.

We'll describe an efficient algorithm for learning this class, which
is based on the GECO algorithm for convex optimization
with low-rank constraints \cite{shalev2011large}.

The goal of the algorithm is to find $f$ that minimizes the objective
\begin{equation}\label{eq:Rloss}
R(f)= \frac{1}{m}\sum_{i=1}^m  \ell(f(\xx_i),y_i),
\end{equation}
where $\ell : \reals \times \reals
\to \reals$ is a loss function. We'll assume that $\ell$ is
$\beta$-smooth and convex.

The basic idea of the algorithm is to gradually add hidden neurons to
the hidden layer, in a greedy manner, so as to decrease the loss function
over the data. To do so, define $\mathcal{V} = \{\xx \mapsto
(\ww^\top\xx)^2 : \|\ww\|_2=1\}$ the set of functions that can be
implemented by hidden neurons. Then every $f \in \dpn_{2,r}$ is an
affine function plus a weighted sum of functions from
$\mathcal{V}$.  The algorithm starts with $f$ being the minimizer of
$R$ over all affine functions. Then at each greedy step, we search
for $g \in \mathcal{V}$ that minimizes a first order approximation of
$R(f + \eta g)$:
\begin{equation} \label{eqn:Rapprox}
R(f + \eta g) \approx R(f) + \eta\, \frac{1}{m} \sum_{i=1}^m
\ell'(f(\xx_i),y_i) g(\xx_i) ~,
\end{equation}
where $\ell'$ is the derivative of $\ell$ w.r.t. its first argument.
Observe that for every $g \in \mathcal{V}$ there is some $\ww$ with
$\|\ww\|_2=1$ for which $g(\xx) = (\ww^\top \xx)^2 = \ww^\top \xx
\xx^\top \ww$. Hence, the right-hand side of \eqref{eqn:Rapprox} can
be rewritten as
$
 R(f) + \eta~ \ww^\top \left( \frac{1}{m} \sum_{i=1}^m
\ell'(f(\xx_i),y_i) \xx_i \xx_i^\top \right) \ww ~.
$
The vector $\ww$ that minimizes this expression (for positive $\eta$) is the leading eigenvector of the
matrix $\left( \frac{1}{m} \sum_{i=1}^m \ell'(f(\xx_i),y_i) \xx_i
  \xx_i^\top \right)$. We add this vector as a hidden neuron to the
network.\footnote{It is also possible to find an approximate solution
  to the eigenvalue problem and still retain the performance
  guarantees (see \cite{shalev2011large}). Since an approximate
  eigenvalue can be found in time $O(d)$ using the power method, we
  obtain the runtime of GECO depends linearly on $d$.} Finally, we
minimize $R$ w.r.t. the weights from the hidden layer to the output
layer (namely, w.r.t. the weights $\alpha_i$).

The following theorem, which follows directly from Theorem 1 of
\cite{shalev2011large}, provides convergence guarantee for
GECO. Observe that the theorem gives guarantee for learning
$\dpn_{2,k}$ if we allow to output an over-specified network.
\begin{theorem} \label{thm:Geco}
Fix some $\epsilon>0$. Assume that the loss function is convex and
$\beta$-smooth. Then if the GECO Algorithm is run for $r >
\frac{2\beta k^2}{\epsilon}$ iterations, it outputs a network $f \in \nn_{2,r,\sigma_2}$ for
which $R(f) \le \min_{f^* \in \dpn_{2,k}} R(f^*) + \epsilon$.
\end{theorem}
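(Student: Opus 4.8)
The plan is to derive Theorem~\ref{thm:Geco} as a direct instance of the general GECO convergence guarantee (Theorem 1 of \cite{shalev2011large}), so the main work is to cast the training of $\dpn_{2,k}$ into the abstract framework to which that theorem applies. First I would recall the setting of GECO: one minimizes a convex, $\beta$-smooth objective $R$ over the linear span of a \emph{base class} $\mathcal{V}$ of ``atoms,'' where each greedy iteration adds the atom best aligned with the negative gradient of $R$, and the published guarantee states that after $r$ iterations the iterate is within $\epsilon$ of the best competitor whose representation uses atoms with bounded total weight. The key observation is that every $f^* \in \dpn_{2,k}$ is an affine function plus a combination $\sum_{i=1}^k \alpha_i (\ww_i^\top\xx)^2$ of at most $k$ atoms from $\mathcal{V} = \{\xx\mapsto(\ww^\top\xx)^2 : \|\ww\|_2 = 1\}$, with coefficients satisfying $|\alpha_i|\le 1$. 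Hence the competitor $f^*$ lives in the span of $\mathcal{V}$ with an $\ell_1$-type complexity measure bounded by $\sum_i|\alpha_i| \le k$, which is exactly the quantity controlling the convergence rate in the GECO bound.

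Next I would make precise the correspondence between the greedy step described in the text and the GECO oracle. As derived in the excerpt, the first-order decrease from adding $g\in\mathcal{V}$ is governed by $\ww^\top M \ww$ where $M = \frac{1}{m}\sum_i \ell'(f(\xx_i),y_i)\,\xx_i\xx_i^\top$, so choosing the leading eigenvector of $M$ is precisely the maximization of the inner product between an atom and the negative gradient that GECO requires. I would then quote the GECO rate: with competitor complexity bounded by $B$ (here $B = k$) and smoothness $\beta$, after $r$ iterations the excess objective is at most $O(\beta B^2 / r)$. Substituting $B = k$ and solving $\frac{2\beta k^2}{r} \le \epsilon$ yields the stated threshold $r > \frac{2\beta k^2}{\epsilon}$, and the output is an explicit network in $\nn_{2,r,\sigma_2}$ since each of the $r$ greedy steps contributes one squared neuron.

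The main obstacle, and the only place demanding care, is verifying that the hypotheses of Theorem 1 of \cite{shalev2011large} are genuinely met rather than merely plausible. Two points need checking. First, GECO's guarantee is naturally phrased in terms of a low-rank (trace-norm) or atomic-norm constraint, so I must confirm that the quadratic atoms $\ww\ww^\top$ with $\|\ww\|_2 = 1$ furnish exactly the extreme points of the relevant convex set and that the $\ell_1$ bound on the $\alpha_i$ translates into the norm constraint under which the cited rate is stated. Second, the presence of the unconstrained affine part $b + \ww_0^\top\xx$ must be handled: since the algorithm initializes $f$ as the exact minimizer of $R$ over all affine functions and only afterward performs greedy atom selection, the affine component contributes no cost to the competitor-complexity term, so the bound depends on $k$ but not on the affine piece. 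Once these two identifications are confirmed, the theorem follows with no further computation, because the $\epsilon$-accuracy and the iteration count are read directly off the invoked GECO bound.
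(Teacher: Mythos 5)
Your proposal is correct and follows essentially the same route as the paper, which proves Theorem~\ref{thm:Geco} simply by invoking Theorem 1 of \cite{shalev2011large}: the competitor $f^*\in\dpn_{2,k}$ has coefficient $\ell_1$-mass at most $k$ over the atom class $\mathcal{V}$, the leading-eigenvector step implements the GECO greedy oracle, and substituting $B=k$ into the $2\beta B^2/r$ rate gives the stated iteration count. Your two verification points (the atomic-norm correspondence and the cost-free affine part) are exactly the identifications the paper leaves implicit, so your write-up is if anything slightly more careful than the original.
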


We next consider a hypothesis class consisting of third degree
polynomials, which is a subset of $3$-layer polynomial networks (see Lemma $1$ in the appendix) . The
hidden neurons will be functions from the class:
$
\mathcal{V}=\cup_{i=1}^{3} \mathcal{V}_i ~~~\textrm{where}~~~
\mathcal{V}_i = \left\{\xx \mapsto \prod_{j=1}^i (\ww_j^\top\xx) :
  \forall j, ~\|\ww_j\|_2 = 1\right\} ~.
$
The hypothesis class we consider is
$
\dpn_{3,k} ~=~ \left\{ \xx \mapsto \sum_{i=1}^k
\alpha_i g_i(\xx) : \forall i,~ |\alpha_i| \le 1, g_i \in
\mathcal{V} \right \}.
$

The basic idea of the algorithm is the same as for 2-layer
networks. However, while in the 2-layer case we could implement
efficiently each greedy step by solving an eigenvalue problem, we now
face the following tensor approximation problem at each greedy step:
\[
\max_{g \in \mathcal{V}_3}  \frac{1}{m} \sum_{i=1}^m
\ell'(f(\xx_i),y_i) g(\xx_i) = \max_{\|\ww\|=1,\|\uu\|=1,\|\vv\|=1} \frac{1}{m} \sum_{i=1}^m
\ell'(f(\xx_i),y_i) (\ww^\top \xx_i) (\uu^\top \xx_i) (\vv^\top \xx_i) ~.
\]
While this is in general a hard optimization problem, we can
approximate it -- and luckily, an approximate greedy step suffices for
success of the greedy procedure. This procedure is given
in Figure~1, and is again based on an approximate eigenvector
computation. A guarantee for the quality of approximation
is given in the appendix, and this leads to the following theorem,
whose proof is given in the appendix.
\begin{theorem} \label{thm:3Gec}
Fix some $\delta,\epsilon>0$. Assume that the loss function is convex and
$\beta$-smooth. Then if the GECO Algorithm is run for
$r>\frac{4d\beta k^2}{\epsilon(1-\tau)^2}$ iterations, where each
iteration relies on the approximation procedure given in
\figref{fig:tensor}, then with probability $(1-\delta)^r$,  it outputs a network $f \in \nn_{3,5r,\sigma_2}$ for
which $R(f) \le \min_{f^* \in \dpn_{3,k}} R(f^*) + \epsilon$.
\end{theorem}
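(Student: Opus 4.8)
The plan is to reduce Theorem~\ref{thm:3Gec} to the approximate-oracle version of the GECO guarantee of \cite{shalev2011large}, exactly as Theorem~\ref{thm:Geco} did in the exact depth-$2$ case. Recall the structure of a single greedy step. Viewing $R$ as a convex, $\beta$-smooth function of the vector of predictions $(f(\xx_1),\dots,f(\xx_m))$, the step seeks $g\in\mathcal{V}$ maximizing the linearized gain $\frac{1}{m}\sum_{i=1}^m \ell'(f(\xx_i),y_i)\,g(\xx_i)$, i.e.\ the inner product of the negative gradient with $g$. For $\mathcal{V}_1$ and $\mathcal{V}_2$ this maximization reduces to a top-eigenvector computation and can be solved exactly, but for $\mathcal{V}_3$ it is precisely the tensor problem displayed above the theorem, which is NP-hard in general. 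The only new ingredient relative to the depth-$2$ argument is therefore to certify how well Figure~\ref{fig:tensor} approximates the optimal greedy direction, and to feed that approximation factor into the GECO rate.

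First I would invoke the approximate form of GECO: if at every step we return a $g\in\mathcal{V}$ whose linearized gain is at least a factor $\tau'$ of the optimum $\max_{g'\in\mathcal{V}}$, then $r>c\,\beta k^2/(\epsilon\,\tau'^2)$ iterations suffice to reach $R(f)\le\min_{f^*\in\dpn_{3,k}}R(f^*)+\epsilon$, for an absolute constant $c$. Second, I would show that the procedure of Figure~\ref{fig:tensor} is such an approximate oracle with $\tau'=\Theta((1-\tau)/\sqrt{d})$. The idea is to contract the symmetric third-order tensor $T=\frac{1}{m}\sum_{i=1}^m\ell'(f(\xx_i),y_i)\,\xx_i^{\otimes 3}$ along a random direction, collapsing it to a matrix whose leading eigenvector is computable (the parameter $\tau$ absorbing the slack of the eigenvector subroutine); a standard argument shows that a random mode-contraction retains an $\Omega(1/\sqrt{d})$ fraction of the best rank-one value with constant probability, which is then boosted to success probability $1-\delta$ per step (this is where $\delta$ enters). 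Taking $\tau'=(1-\tau)/\sqrt{2d}$ makes $c\,\beta k^2/(\epsilon\,\tau'^2)$ match the stated threshold $4d\beta k^2/(\epsilon(1-\tau)^2)$. The quantitative approximation guarantee is the lemma proved in the appendix, which I would cite rather than reprove here.

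Third, I would account for the randomness and the output size. Each of the $r$ greedy steps draws fresh, independent randomness, so conditioning on the event that all $r$ invocations meet their approximation guarantee — an event of probability $(1-\delta)^r$ by independence — the deterministic GECO bound applies verbatim and gives $R(f)\le\min_{f^*\in\dpn_{3,k}}R(f^*)+\epsilon$ once $r$ exceeds the threshold. Finally, each greedy atom lies in $\mathcal{V}_1\cup\mathcal{V}_2\cup\mathcal{V}_3$, and by Lemma~$1$ in the appendix any such product of at most three linear forms is computable by a depth-$3$ squared-activation network using at most $5$ neurons; summing over the $r$ atoms (absorbing the initial affine term) yields $f\in\nn_{3,5r,\sigma_2}$, as claimed.

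The main obstacle is the second step: establishing the $\Omega((1-\tau)/\sqrt{d})$ approximation ratio for the random-contraction procedure. Since best rank-one tensor approximation is NP-hard, the whole positive result hinges on showing that collapsing one tensor mode along a random direction forfeits only a $\sqrt{d}$ factor in objective value with good probability, and on tracking precisely how the eigenvector tolerance $\tau$ and the dimension $d$ propagate into the final rate. Everything else — the reduction to approximate GECO, the independence-based probability accounting, and the network-size bookkeeping — is routine once this lemma is in hand.
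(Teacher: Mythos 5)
Your proposal follows essentially the same route as the paper's own proof: reduce to the approximate-oracle version of the GECO guarantee of \cite{shalev2011large} with per-step approximation factor $(1-\tau)/\sqrt{2d}$, establish that factor for the random-contraction procedure of \figref{fig:tensor} (the paper's \lemref{lem:tensorapprox}), multiply the independent per-step success probabilities to get $(1-\delta)^r$, and count neurons via \lemref{lem:polyexpressive}. One small correction to your sketch of the key lemma: a random contraction direction $\hat{\ww}$ retains a $1/\sqrt{2d}$ fraction of the optimal value with probability only $\Theta(1/d)$ per draw (since $\E[(\hat{\ww}^\top\ww^*)^2]=1/d$ and one applies the inverse Markov inequality), not with constant probability --- this is precisely why the procedure must repeat the draw $2d\log\frac{1}{\delta}$ times to reach confidence $1-\delta$ in each greedy step.
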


\begin{figure}
\begin{tikzpicture}

\node at (0,0) {
\fbox{
\small
\begin{Balgorithm} %***************************Approx********************************
\textbf{Input}: $\{x_i\}_{i=1}^m\in \mathbb{R}^d$ $\alpha\in \mathbb{R}^m$, $\tau$,$\delta$ \\
\textbf{Output}: A $\frac{1-\tau}{\sqrt{d}}$ approximate solution to
\\$\displaystyle \max_{\|\ww\|,\|\uu\|,\|\vv\|=1} F(\ww,\uu,\vv)=\sum_i \alpha_i (\ww^\top \xx_i) (\uu^\top\xx_i)(\vv^\top \xx_i)$\\
Pick randomly $\ww_1,\ldots,\ww_s$ iid according to $\mathcal{N}(0,I_d)$.\\
  \textbf{For} $t=1,\ldots ,2d\log\frac{1}{\delta}$\+\+\\
  $\ww_t \leftarrow \frac{\ww_t}{\|\ww_t\|}$\\
  Let $A=\sum_i \alpha_i (\ww_t^\top \xx_i) \xx_i\xx_i^\top$ and set
  $\uu_t,\vv_t$ s.t:\\
 $Tr(\uu_t^\top A\vv_t) \ge (1-\tau)   \max_{\|\uu\|,\|\vv\|=1} Tr(\uu^\top A\vv)$. \-\-\\
Return $\ww,\uu, \vv$ the maximizers of $\max_{i\le s} F(\ww_i,\uu_i,\uu_i)$.
\end{Balgorithm}}
};
\node at (0,-2.5) {Figure 1: Approximate tensor maximization.};

\node at (7,0) {\includegraphics[totalheight=0.2\textheight]{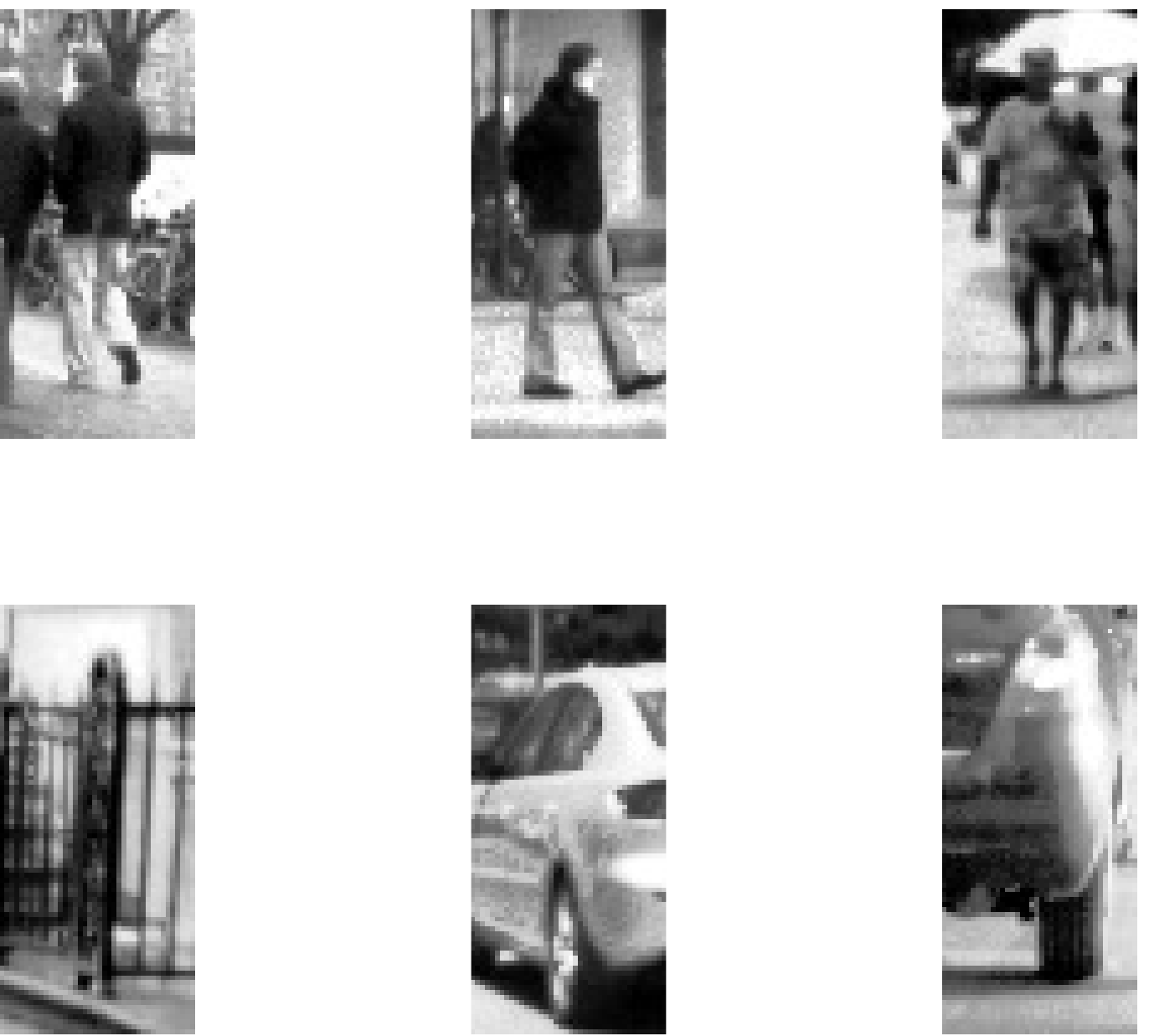}};

\end{tikzpicture}
\refstepcounter{figure}\label{fig:tensor} \vskip -0.5cm
\end{figure}

\vspace{-0.4cm}
\section{Experiments} \label{sec:experiments}
To demonstrate the practicality of GECO to train neural networks for real world problems, we
considered a pedestrian detection problem as follows. We
collected 200k training examples of image patches of size 88x40 pixels
containing either pedestrians (positive examples) or hard negative examples (containing
images that were classified as pedestrians by
applying a simple linear classifier in a sliding window manner).
See a few examples of images above. We used half of the examples as a
\begin{wrapfigure}{r}{0.34\textwidth}
  \begin{center}
\hfill
\begin{tikzpicture}[scale=0.53]
    \begin{axis}[
     xlabel={iterations},
     ylabel={Error}
    ]

\addplot[blue,line width=2pt] coordinates {
(1000,0.0964) (2000,0.0982) (3000,0.0857) (4000,0.0804) (5000,0.0763) (6000,0.0729) (7000,0.0726) (8000,0.0695) (9000,0.0691) (10000,0.0698) (11000,0.0705) (12000,0.0639) (13000,0.0627) (14000,0.0645) (15000,0.0614) (16000,0.0638) (17000,0.0609) (18000,0.0604) (19000,0.0635) (20000,0.0607) (21000,0.0613) (22000,0.0596) (23000,0.0604) (24000,0.0605) (25000,0.0589) (26000,0.0598) (27000,0.0598) (28000,0.0626) (29000,0.0587) (30000,0.0594) (31000,0.0584) (32000,0.0587) (33000,0.0587) (34000,0.0595) (35000,0.0588) (36000,0.0577) (37000,0.0578) (38000,0.0586) (39000,0.0564) (40000,0.0593) (41000,0.0568) (42000,0.0569) (43000,0.0583) (44000,0.0581) (45000,0.0581) (46000,0.0585) (47000,0.0581) (48000,0.0561) (49000,0.0568) (50000,0.0573) (51000,0.0566) (52000,0.0564) (53000,0.0562) (54000,0.0567) (55000,0.0570) (56000,0.0569) (57000,0.0570) (58000,0.0562) (59000,0.0564) (60000,0.0560) (61000,0.0573) (62000,0.0563) (63000,0.0569) (64000,0.0578) (65000,0.0566) (66000,0.0567) (67000,0.0569) (68000,0.0569) (69000,0.0573) (70000,0.0557) (71000,0.0558) (72000,0.0565) (73000,0.0557) (74000,0.0566) (75000,0.0545) (76000,0.0554) (77000,0.0556) (78000,0.0561) (79000,0.0555) (80000,0.0552) (81000,0.0545) (82000,0.0551) (83000,0.0551) (84000,0.0559) (85000,0.0551) (86000,0.0547) (87000,0.0552) (88000,0.0567) (89000,0.0549) (90000,0.0553) (91000,0.0555) (92000,0.0548) (93000,0.0555) (94000,0.0548) (95000,0.0555) (96000,0.0549) (97000,0.0554) (98000,0.0559) (99000,0.0546) (100000,0.0549)
};
\addlegendentry{SGD ReLU};

\addplot[red,dashed,line width=2pt] coordinates {
(1000,0.0942) (2000,0.0853) (3000,0.0861) (4000,0.0760) (5000,0.0743) (6000,0.0726) (7000,0.0691) (8000,0.0682) (9000,0.0646) (10000,0.0652) (11000,0.0628) (12000,0.0634) (13000,0.0611) (14000,0.0595) (15000,0.0602) (16000,0.0579) (17000,0.0639) (18000,0.0635) (19000,0.0593) (20000,0.0626) (21000,0.0568) (22000,0.0580) (23000,0.0588) (24000,0.0561) (25000,0.0570) (26000,0.0552) (27000,0.0551) (28000,0.0555) (29000,0.0549) (30000,0.0586) (31000,0.0546) (32000,0.0561) (33000,0.0548) (34000,0.0599) (35000,0.0544) (36000,0.0547) (37000,0.0568) (38000,0.0551) (39000,0.0544) (40000,0.0540) (41000,0.0536) (42000,0.0537) (43000,0.0539) (44000,0.0530) (45000,0.0532) (46000,0.0543) (47000,0.0536) (48000,0.0541) (49000,0.0538) (50000,0.0548) (51000,0.0534) (52000,0.0541) (53000,0.0547) (54000,0.0541) (55000,0.0552) (56000,0.0539) (57000,0.0531) (58000,0.0549) (59000,0.0555) (60000,0.0584) (61000,0.0538) (62000,0.0523) (63000,0.0530) (64000,0.0540) (65000,0.0542) (66000,0.0534) (67000,0.0529) (68000,0.0555) (69000,0.0535) (70000,0.0531) (71000,0.0535) (72000,0.0530) (73000,0.0529) (74000,0.0525) (75000,0.0533) (76000,0.0551) (77000,0.0533) (78000,0.0528) (79000,0.0530) (80000,0.0538) (81000,0.0533) (82000,0.0536) (83000,0.0542) (84000,0.0518) (85000,0.0531) (86000,0.0526) (87000,0.0525) (88000,0.0530) (89000,0.0540) (90000,0.0554) (91000,0.0524) (92000,0.0531) (93000,0.0528) (94000,0.0530) (95000,0.0519) (96000,0.0530) (97000,0.0529) (98000,0.0542) (99000,0.0528) (100000,0.0521)
};
\addlegendentry{SGD Squared};

\addplot[black,line width=2pt] coordinates {
(1000,0.05) (100000,0.05)
};
\addlegendentry{GECO};

\end{axis}
\end{tikzpicture}

\vspace{0.2cm}

\hfill
\begin{tikzpicture}[scale=0.53]
    \begin{axis}[
     xlabel={\#iterations},
     ylabel={MSE}
    ]

\addplot[blue,line width=2pt] coordinates {
(500,3.911) (1000,3.493) (1500,3.322) (2000,3.142) (2500,3.062) (3000,2.979) (3500,2.853) (4000,2.813) (4500,2.730) (5000,2.753) (5500,2.633) (6000,2.616) (6500,2.572) (7000,2.509) (7500,2.495) (8000,2.557) (8500,2.430) (9000,2.396) (9500,2.435) (10000,2.371) (10500,2.415) (11000,2.341) (11500,2.330) (12000,2.327) (12500,2.280) (13000,2.309) (13500,2.316) (14000,2.256) (14500,2.264) (15000,2.213) (15500,2.187) (16000,2.205) (16500,2.199) (17000,2.172) (17500,2.211) (18000,2.152) (18500,2.162) (19000,2.144) (19500,2.134) (20000,2.156) (20500,2.144) (21000,2.149) (21500,2.120) (22000,2.097) (22500,2.136) (23000,2.112) (23500,2.112) (24000,2.107) (24500,2.069) (25000,2.084) (25500,2.080) (26000,2.072) (26500,2.060) (27000,2.069) (27500,2.076) (28000,2.035) (28500,2.047) (29000,2.037) (29500,2.018) (30000,2.038) (30500,2.042) (31000,2.028) (31500,2.055) (32000,1.994) (32500,2.008) (33000,1.997) (33500,1.988) (34000,2.017) (34500,1.986) (35000,2.012) (35500,1.992) (36000,1.980) (36500,1.990) (37000,1.967) (37500,1.993) (38000,1.992) (38500,1.983) (39000,1.954) (39500,2.011) (40000,1.984) (40500,1.963) (41000,1.972) (41500,1.952) (42000,1.948) (42500,1.931) (43000,1.930) (43500,1.949) (44000,1.944) (44500,1.950) (45000,1.944) (45500,1.923) (46000,1.908) (46500,1.908) (47000,1.929) (47500,1.987) (48000,1.961) (48500,1.937) (49000,1.924) (49500,1.915) (50000,1.920) (50500,1.930) (51000,1.920) (51500,1.921) (52000,1.902) (52500,1.936) (53000,1.912) (53500,1.894) (54000,1.894) (54500,1.896) (55000,1.886) (55500,1.911) (56000,1.893) (56500,1.882) (57000,1.894) (57500,1.893) (58000,1.877) (58500,1.896) (59000,1.882) (59500,1.889) (60000,1.895) (60500,1.882) (61000,1.867) (61500,1.882) (62000,1.875) (62500,1.885) (63000,1.878) (63500,1.883) (64000,1.889) (64500,1.884) (65000,1.892) (65500,1.859) (66000,1.905) (66500,1.877) (67000,1.863) (67500,1.859) (68000,1.855) (68500,1.870) (69000,1.858) (69500,1.867) (70000,1.850) (70500,1.848) (71000,1.852) (71500,1.852) (72000,1.851) (72500,1.859) (73000,1.837) (73500,1.848) (74000,1.879) (74500,1.852) (75000,1.856) (75500,1.865) (76000,1.836) (76500,1.836) (77000,1.831) (77500,1.836) (78000,1.841) (78500,1.830) (79000,1.834) (79500,1.835) (80000,1.826) (80500,1.827) (81000,1.821) (81500,1.829) (82000,1.842) (82500,1.826) (83000,1.829) (83500,1.823) (84000,1.822) (84500,1.820) (85000,1.833) (85500,1.826) (86000,1.818) (86500,1.810) (87000,1.818) (87500,1.823) (88000,1.816) (88500,1.818) (89000,1.807) (89500,1.809) (90000,1.837) (90500,1.807) (91000,1.808) (91500,1.813) (92000,1.811) (92500,1.812) (93000,1.804) (93500,1.820) (94000,1.807) (94500,1.804) (95000,1.799) (95500,1.810) (96000,1.804) (96500,1.791) (97000,1.802) (97500,1.828) (98000,1.828) (98500,1.808) (99000,1.810) (99500,1.798) (100000,1.836)
};
\addlegendentry{1};

\addplot[red,line width=2pt] coordinates {
(500,3.558) (1000,3.125) (1500,2.945) (2000,2.754) (2500,2.636) (3000,2.561) (3500,2.387) (4000,2.337) (4500,2.254) (5000,2.156) (5500,2.093) (6000,2.030) (6500,1.941) (7000,1.978) (7500,1.887) (8000,1.891) (8500,1.802) (9000,1.787) (9500,1.743) (10000,1.712) (10500,1.660) (11000,1.655) (11500,1.641) (12000,1.607) (12500,1.575) (13000,1.598) (13500,1.568) (14000,1.504) (14500,1.496) (15000,1.480) (15500,1.449) (16000,1.471) (16500,1.440) (17000,1.423) (17500,1.384) (18000,1.376) (18500,1.377) (19000,1.361) (19500,1.382) (20000,1.310) (20500,1.317) (21000,1.316) (21500,1.261) (22000,1.248) (22500,1.248) (23000,1.251) (23500,1.233) (24000,1.212) (24500,1.217) (25000,1.204) (25500,1.205) (26000,1.200) (26500,1.189) (27000,1.191) (27500,1.163) (28000,1.152) (28500,1.165) (29000,1.141) (29500,1.127) (30000,1.129) (30500,1.116) (31000,1.083) (31500,1.107) (32000,1.087) (32500,1.075) (33000,1.070) (33500,1.068) (34000,1.057) (34500,1.056) (35000,1.047) (35500,1.055) (36000,1.033) (36500,1.034) (37000,1.019) (37500,1.024) (38000,1.015) (38500,1.002) (39000,1.001) (39500,0.992) (40000,1.016) (40500,1.007) (41000,0.977) (41500,0.976) (42000,0.980) (42500,0.955) (43000,0.957) (43500,0.974) (44000,0.951) (44500,0.953) (45000,0.946) (45500,0.942) (46000,0.939) (46500,0.949) (47000,0.926) (47500,0.931) (48000,0.915) (48500,0.919) (49000,0.907) (49500,0.927) (50000,0.916) (50500,0.900) (51000,0.892) (51500,0.887) (52000,0.901) (52500,0.896) (53000,0.879) (53500,0.886) (54000,0.893) (54500,0.870) (55000,0.870) (55500,0.866) (56000,0.863) (56500,0.855) (57000,0.861) (57500,0.857) (58000,0.856) (58500,0.849) (59000,0.856) (59500,0.844) (60000,0.840) (60500,0.831) (61000,0.839) (61500,0.825) (62000,0.829) (62500,0.823) (63000,0.824) (63500,0.829) (64000,0.810) (64500,0.814) (65000,0.805) (65500,0.808) (66000,0.798) (66500,0.804) (67000,0.797) (67500,0.805) (68000,0.799) (68500,0.792) (69000,0.786) (69500,0.782) (70000,0.786) (70500,0.777) (71000,0.787) (71500,0.777) (72000,0.775) (72500,0.768) (73000,0.762) (73500,0.775) (74000,0.773) (74500,0.765) (75000,0.774) (75500,0.769) (76000,0.762) (76500,0.760) (77000,0.747) (77500,0.749) (78000,0.760) (78500,0.738) (79000,0.749) (79500,0.741) (80000,0.740) (80500,0.733) (81000,0.734) (81500,0.735) (82000,0.724) (82500,0.731) (83000,0.736) (83500,0.732) (84000,0.730) (84500,0.721) (85000,0.714) (85500,0.715) (86000,0.711) (86500,0.713) (87000,0.703) (87500,0.709) (88000,0.698) (88500,0.694) (89000,0.702) (89500,0.705) (90000,0.700) (90500,0.691) (91000,0.687) (91500,0.686) (92000,0.687) (92500,0.679) (93000,0.686) (93500,0.689) (94000,0.681) (94500,0.673) (95000,0.675) (95500,0.676) (96000,0.668) (96500,0.675) (97000,0.670) (97500,0.663) (98000,0.665) (98500,0.664) (99000,0.666) (99500,0.657) (100000,0.662)
};
\addlegendentry{2};

\addplot[black,line width=2pt] coordinates {
(500,3.467) (1000,2.981) (1500,2.553) (2000,2.305) (2500,2.068) (3000,1.912) (3500,1.721) (4000,1.585) (4500,1.465) (5000,1.340) (5500,1.200) (6000,1.168) (6500,1.087) (7000,1.046) (7500,0.967) (8000,0.863) (8500,0.817) (9000,0.763) (9500,0.702) (10000,0.682) (10500,0.636) (11000,0.583) (11500,0.552) (12000,0.551) (12500,0.495) (13000,0.476) (13500,0.439) (14000,0.423) (14500,0.414) (15000,0.365) (15500,0.344) (16000,0.344) (16500,0.329) (17000,0.315) (17500,0.282) (18000,0.276) (18500,0.268) (19000,0.267) (19500,0.248) (20000,0.234) (20500,0.233) (21000,0.208) (21500,0.215) (22000,0.197) (22500,0.183) (23000,0.183) (23500,0.177) (24000,0.180) (24500,0.157) (25000,0.161) (25500,0.153) (26000,0.148) (26500,0.142) (27000,0.133) (27500,0.127) (28000,0.130) (28500,0.119) (29000,0.122) (29500,0.117) (30000,0.114) (30500,0.109) (31000,0.103) (31500,0.100) (32000,0.099) (32500,0.096) (33000,0.095) (33500,0.093) (34000,0.087) (34500,0.086) (35000,0.086) (35500,0.083) (36000,0.082) (36500,0.080) (37000,0.079) (37500,0.072) (38000,0.076) (38500,0.071) (39000,0.070) (39500,0.066) (40000,0.071) (40500,0.065) (41000,0.063) (41500,0.067) (42000,0.063) (42500,0.061) (43000,0.062) (43500,0.060) (44000,0.060) (44500,0.059) (45000,0.056) (45500,0.056) (46000,0.056) (46500,0.055) (47000,0.053) (47500,0.053) (48000,0.053) (48500,0.051) (49000,0.051) (49500,0.051) (50000,0.049) (50500,0.051) (51000,0.052) (51500,0.048) (52000,0.048) (52500,0.047) (53000,0.048) (53500,0.046) (54000,0.047) (54500,0.045) (55000,0.046) (55500,0.044) (56000,0.044) (56500,0.044) (57000,0.043) (57500,0.045) (58000,0.044) (58500,0.044) (59000,0.043) (59500,0.042) (60000,0.042) (60500,0.041) (61000,0.042) (61500,0.041) (62000,0.039) (62500,0.039) (63000,0.038) (63500,0.039) (64000,0.038) (64500,0.038) (65000,0.035) (65500,0.036) (66000,0.041) (66500,0.036) (67000,0.035) (67500,0.035) (68000,0.036) (68500,0.035) (69000,0.034) (69500,0.034) (70000,0.034) (70500,0.033) (71000,0.032) (71500,0.031) (72000,0.033) (72500,0.032) (73000,0.032) (73500,0.032) (74000,0.032) (74500,0.032) (75000,0.032) (75500,0.033) (76000,0.031) (76500,0.032) (77000,0.031) (77500,0.031) (78000,0.030) (78500,0.030) (79000,0.031) (79500,0.033) (80000,0.030) (80500,0.029) (81000,0.030) (81500,0.031) (82000,0.029) (82500,0.029) (83000,0.030) (83500,0.030) (84000,0.028) (84500,0.028) (85000,0.028) (85500,0.028) (86000,0.029) (86500,0.028) (87000,0.028) (87500,0.028) (88000,0.027) (88500,0.027) (89000,0.027) (89500,0.027) (90000,0.026) (90500,0.027) (91000,0.026) (91500,0.026) (92000,0.027) (92500,0.026) (93000,0.026) (93500,0.025) (94000,0.026) (94500,0.025) (95000,0.026) (95500,0.025) (96000,0.025) (96500,0.025) (97000,0.026) (97500,0.025) (98000,0.026) (98500,0.024) (99000,0.023) (99500,0.025) (100000,0.025)
};
\addlegendentry{4};

\addplot[green,line width=2pt] coordinates {
(500,3.326) (1000,2.673) (1500,2.345) (2000,2.022) (2500,1.601) (3000,1.408) (3500,1.248) (4000,1.024) (4500,0.852) (5000,0.734) (5500,0.646) (6000,0.537) (6500,0.461) (7000,0.364) (7500,0.310) (8000,0.267) (8500,0.224) (9000,0.182) (9500,0.152) (10000,0.123) (10500,0.107) (11000,0.091) (11500,0.084) (12000,0.065) (12500,0.061) (13000,0.052) (13500,0.039) (14000,0.037) (14500,0.031) (15000,0.031) (15500,0.027) (16000,0.023) (16500,0.022) (17000,0.021) (17500,0.018) (18000,0.018) (18500,0.017) (19000,0.016) (19500,0.015) (20000,0.015) (20500,0.014) (21000,0.015) (21500,0.015) (22000,0.014) (22500,0.013) (23000,0.014) (23500,0.014) (24000,0.012) (24500,0.013) (25000,0.013) (25500,0.012) (26000,0.012) (26500,0.013) (27000,0.012) (27500,0.012) (28000,0.011) (28500,0.011) (29000,0.011) (29500,0.011) (30000,0.011) (30500,0.012) (31000,0.011) (31500,0.012) (32000,0.011) (32500,0.012) (33000,0.012) (33500,0.011) (34000,0.011) (34500,0.012) (35000,0.012) (35500,0.011) (36000,0.011) (36500,0.011) (37000,0.011) (37500,0.010) (38000,0.011) (38500,0.011) (39000,0.011) (39500,0.011) (40000,0.011) (40500,0.011) (41000,0.011) (41500,0.010) (42000,0.010) (42500,0.010) (43000,0.010) (43500,0.011) (44000,0.010) (44500,0.011) (45000,0.010) (45500,0.009) (46000,0.010) (46500,0.010) (47000,0.010) (47500,0.010) (48000,0.009) (48500,0.009) (49000,0.008) (49500,0.010) (50000,0.009) (50500,0.008) (51000,0.009) (51500,0.009) (52000,0.009) (52500,0.009) (53000,0.009) (53500,0.008) (54000,0.009) (54500,0.009) (55000,0.008) (55500,0.008) (56000,0.009) (56500,0.009) (57000,0.009) (57500,0.008) (58000,0.009) (58500,0.008) (59000,0.008) (59500,0.008) (60000,0.008) (60500,0.008) (61000,0.008) (61500,0.007) (62000,0.007) (62500,0.008) (63000,0.008) (63500,0.008) (64000,0.008) (64500,0.007) (65000,0.007) (65500,0.007) (66000,0.008) (66500,0.007) (67000,0.008) (67500,0.007) (68000,0.007) (68500,0.007) (69000,0.007) (69500,0.007) (70000,0.008) (70500,0.007) (71000,0.007) (71500,0.007) (72000,0.007) (72500,0.007) (73000,0.007) (73500,0.007) (74000,0.008) (74500,0.007) (75000,0.007) (75500,0.007) (76000,0.006) (76500,0.006) (77000,0.007) (77500,0.006) (78000,0.006) (78500,0.006) (79000,0.006) (79500,0.006) (80000,0.006) (80500,0.006) (81000,0.006) (81500,0.006) (82000,0.006) (82500,0.006) (83000,0.006) (83500,0.006) (84000,0.006) (84500,0.006) (85000,0.006) (85500,0.006) (86000,0.006) (86500,0.006) (87000,0.006) (87500,0.006) (88000,0.006) (88500,0.006) (89000,0.006) (89500,0.006) (90000,0.005) (90500,0.006) (91000,0.006) (91500,0.005) (92000,0.006) (92500,0.005) (93000,0.005) (93500,0.005) (94000,0.006) (94500,0.006) (95000,0.006) (95500,0.005) (96000,0.005) (96500,0.006) (97000,0.005) (97500,0.006) (98000,0.006) (98500,0.005) (99000,0.005) (99500,0.005) (100000,0.005)
};
\addlegendentry{8};

\end{axis}
\end{tikzpicture}
\end{center}
%\caption{\small Top: GECO vs. SGD. Bottom: The effect of
 % overspecification.} \label{fig:experiment}
\end{wrapfigure}
training set and the other half as a test set. We calculated HoG
features (\cite{dalal2005histograms}) from the images\footnote{Using
  the Matlab implementation provided in
  \url{http://www.mathworks.com/matlabcentral/fileexchange/33863-histograms-of-oriented-gradients}.}.
We then trained, using GECO, a depth-$2$ polynomial network on the resulting
features. We used $40$ neurons in the hidden layer. For comparison we trained
the same network architecture (i.e. $40$ hidden neurons with a squared
activation function) by SGD. We also trained a similar network ($40$ hidden
neurons again) with the ReLU activation function.  For the SGD implementation
we tried the following tricks to speed up the convergence: heuristics for
initialization of the weights, learning rate rules, mini-batches, Nesterov's
momentum (as explained in \cite{sutskever2013importance}), and dropout. The
test errors of SGD as a function of the number of iterations are depicted on
the top plot of the Figure on the side. We also mark the performance of GECO
as a straight line (since it doesn't involve SGD iterations). As can be seen,
the error of GECO is slightly better than SGD. It should be also noted that
we had to perform a very large number of SGD iterations to obtain a good
solution, while the runtime of GECO was much faster. This indicates that GECO
may be a valid alternative approach to SGD for training depth-$2$ networks.
It is also apparent that the squared activation function is slightly better
than the ReLU function for this task.

The second plot of the side figure demonstrates the
benefit of over-specification for SGD. We generated random
examples in $\reals^{150}$ and passed them through a random
depth-$2$ network that contains 60 hidden neurons with the ReLU
activation function. We then tried to fit a new network to this data
with over-specification factors of $1,2,4,8$ (e.g., over-specification
factor of $4$ means that we used $60 \cdot 4=240$ hidden neurons).
As can be clearly seen, SGD converges much faster when we over-specify
the network.

\textbf{Acknowledgements:} This research is supported by Intel
(ICRI-CI). OS was also supported by an ISF grant (No. 425/13), and a
Marie-Curie Career Integration Grant.  SSS and RL were also supported
by the MOS center of Knowledge for AI and ML (No. 3-9243). RL is a recipient of the Google Europe Fellowship in Learning Theory, and this research is supported in part by this Google Fellowship.
  We thank Itay Safran for spotting a mistake in a previous
version of \secref{sec:overeasy} and to James Martens for helpful discussions.

\ignore{
\section{Discussion}

In practice, algorithms for training deep neural currently present performances that are state of the art in problems such as vision, speech recognition and more. On the other hand the theory of deep learning seem to be, if any, highly pessimistic. In this paper, we hope to have raised some of the interesting challenges deep learning theory pose. We don't pretend to have solved any of these problems, and we end this paper with more questions than answers...

We've demonstrated that extremely over specified networks are easy to train, and that global optima are ubiquitous in some sense for such networks. 
\textbf{Can the behavior discussed in \secref{sec:overeasy} be extended to
      smaller networks?} 

Using over specification and greedy techniques we've demonstrated that two layer polynomial networks can be learned. These result can be generalized to the task of learning $3$-degree polynomials at a cost that is linear in the dimension. Similarly, one can show that $t$-degree polynomials can in general be learned at time $d^{t-2}$.
\textbf{Is it possible to learn polynomials of bounded degree more
      efficiently? Is it impossible to efficiently learn polynomials of degree
      $\omega(1)$?}

Finally, we would like to have better understanding as to what guarantees can be made with respect to deep learning.
\textbf{What combination of distributional assumptions and network
      structure allows provably efficient deep learning?}
}

\bibliographystyle{plain}
\bibliography{DPN}

\begin{thebibliography}{10}

\bibitem{andoni2014learning}
A.~Andoni, R.~Panigrahy, G.~Valiant, and L.~Zhang.
\newblock Learning polynomials with neural networks.
\newblock In {\em ICML}, 2014.

\bibitem{andoni2013learning}
A.~Andoni, R.~Panigrahy, G.~Valiant, and L.~Zhang.
\newblock Learning sparse polynomial functions.
\newblock In {\em SODA}, 2014.

\bibitem{AnBa02}
M.~Anthony and P.~Bartlett.
\newblock {\em Neural Network Learning - Theoretical Foundations}.
\newblock Cambridge University Press, 2002.

\bibitem{arora2013provable}
S.~Arora, A.~Bhaskara, R.~Ge, and T.~Ma.
\newblock Provable bounds for learning some deep representations.
\newblock {\em arXiv preprint arXiv:1310.6343}, 2013.

\bibitem{auer1996exponentially}
P.~Auer, M.~Herbster, and M.~Warmuth.
\newblock Exponentially many local minima for single neurons.
\newblock In {\em NIPS}, 1996.

\bibitem{BartlettB02}
P.~L. Bartlett and S.~Ben-David.
\newblock Hardness results for neural network approximation problems.
\newblock {\em Theor. Comput. Sci.}, 284(1):53--66, 2002.

\bibitem{bengio2013representation}
Y.~Bengio, A.~Courville, and P.~Vincent.
\newblock Representation learning: A review and new perspectives.
\newblock {\em IEEE Transactions on Pattern Analysis and Machine Intelligence},
  35:1798--1828, 2013.

\bibitem{BlOdWi10}
E.~Blais, R.~O'Donnell, and K.~Wimmer.
\newblock Polynomial regression under arbitrary product distributions.
\newblock {\em Machine Learning}, 80(2-3):273--294, 2010.

\bibitem{blum1992training}
A.~Blum and R.~Rivest.
\newblock Training a 3-node neural network is np-complete.
\newblock {\em Neural Networks}, 5(1):117--127, 1992.

\bibitem{dahl2013dropout}
G.~Dahl, T.~Sainath, and G.~Hinton.
\newblock Improving deep neural networks for lvcsr using rectified linear units
  and dropout.
\newblock In {\em ICASSP}, 2013.

\bibitem{dalal2005histograms}
N.~Dalal and B.~Triggs.
\newblock Histograms of oriented gradients for human detection.
\newblock In {\em CVPR}, 2005.

\bibitem{DanielyLiSh14stoc}
A.~Daniely, N.~Linial, and S.~Shalev-Shwartz.
\newblock From average case complexity to improper learning complexity.
\newblock In {\em FOCS}, 2014.

\bibitem{KaKlMaSe08}
A.~Kalai, A.~Klivans, Y.~Mansour, and R.~Servedio.
\newblock Agnostically learning halfspaces.
\newblock {\em SIAM J. Comput.}, 37(6):1777--1805, 2008.

\bibitem{KaSaTe09}
A.~Kalai, A.~Samorodnitsky, and S.-H. Teng.
\newblock Learning and smoothed analysis.
\newblock In {\em FOCS}, 2009.

\bibitem{KlivansSh06}
A.~Klivans and A.~Sherstov.
\newblock Cryptographic hardness for learning intersections of halfspaces.
\newblock In {\em FOCS}, 2006.

\bibitem{krizhevsky2012imagenet}
A.~Krizhevsky, I.~Sutskever, and G.~Hinton.
\newblock Imagenet classification with deep convolutional neural networks.
\newblock In {\em NIPS}, 2012.

\bibitem{LRMDCCDN12}
Q.~V. Le, M.-A. Ranzato, R.~Monga, M.~Devin, G.~Corrado, K.~Chen, J.~Dean, and
  A.~Y. Ng.
\newblock Building high-level features using large scale unsupervised learning.
\newblock In {\em ICML}, 2012.

\bibitem{pippenger1979relations}
N.~Pippenger and M.~Fischer.
\newblock Relations among complexity measures.
\newblock {\em Journal of the ACM (JACM)}, 26(2):361--381, 1979.

\bibitem{SSSSBD14}
S.~Shalev-Shwartz and S.~Ben-David.
\newblock {\em Understanding Machine Learning: From Theory to Algorithms}.
\newblock Cambridge University Press, 2014.

\bibitem{shalev2011large}
S.~Shalev-Shwartz, A.~Gonen, and O.~Shamir.
\newblock Large-scale convex minimization with a low-rank constraint.
\newblock In {\em ICML}, 2011.

\bibitem{Shamir2010Learning}
S.~Shalev-Shwartz, O.~Shamir, and K.~Sridharan.
\newblock Learning kernel-based halfspaces with the 0-1 loss.
\newblock {\em SIAM Journal on Computing}, 40(6):1623--1646, 2011.

\bibitem{SipserBook}
M.~Sipser.
\newblock {\em Introduction to the Theory of Computation}.
\newblock Thomson Course Technology, 2006.

\bibitem{sutskever2013importance}
I.~Sutskever, J.~Martens, G.~Dahl, and G.~Hinton.
\newblock On the importance of initialization and momentum in deep learning.
\newblock In {\em ICML}, 2013.

\bibitem{zeiler2013visualizing}
M.~Zeiler and R.~Fergus.
\newblock Visualizing and understanding convolutional neural networks.
\newblock {\em arXiv preprint arXiv:1311.2901}, 2013.

\end{thebibliography}

\newpage
% !TEX root = Effdeep6.tex
\appendix
\section{Proofs}

%************************* HArdness of sigmoidal networks************************
\subsection{Proof of \corref{cor:sigmoidHard}}\label{ap:sigmoidHard}
\subsubsection{Hardness result for the class $\nn_{2,n,\sigma_{\sig},L}$:}

Consider $H^a$ as defined in \thmref{thm:hard}. Note that for every $h\in H^a$ there are integral $\ww$ and $b$ such that $h(\xx)= \ww^{\top} \xx - b-\frac{1}{2}$ and we have that
$|h(\xx)|\ge 1/2$.
Given $k$ hyperplanes $\{h_i\}_{i=1}^k$ consider the neurons
\[g_i(\xx) = 1/\left(1+\exp\left(-C h_{i}(\xx)\right)\right),\]
where $C\in \omega(1)$ is to be chosen later.
  Let
 \[ g(\xx)= \frac{C d}{2k+\frac{1}{3}}\left(\sum_{i=1}^k g_i(\xx) - k+\frac{1}{3}\right).\]

If $\|\ww\|_1+|b+\frac{1}{2}|\le d$ we have that $g(\xx)\in \nn_{2,n,\sigma_{\sig},L}$, whenever $L\ge Cd$.
Choose $C\in O(k)$ sufficiently large so that
 \[1/\left(1+\exp\left(-\frac{C}{2}\right)\right) - 1> -\frac{1}{3 k}.\] and
 \[1/\left(1+\exp\left(\frac{C}{2}\right)\right) < \frac{2}{3}.\]
   Since $|h_i(\xx)|\ge \frac{1}{2}$ for all $i$, if the output of all neurons $\{g_i\}$ is positive we have
   \[\frac{2k+1/3}{Cd} g(\xx) \ge k\left(\frac{1}{1+\exp(-\frac{C}{2})}-1\right)+\frac{1}{3}>0.\]
   On the other hand, if $h_i(\xx)<-\frac{1}{2}$ for some $i$ we have that \[\frac{2k+1/3}{Cd} g(x) \le \sum_{i=1}^{k-1} g_i(\xx) +\frac{1}{1+\exp(\frac{C}{2})} -k +\frac{1}{3}\le k-1 -k +\frac{1}{1+\exp\frac{C}{2}}+\frac{1}{3}< 0.\]
 We've demonstrated that the target function $\mathrm{sign}(g(\xx))$ implements $h_1\wedge h_2\wedge\ldots \wedge h_k$ thus $H_k^a \subseteq \nn_{2,k+1,\sigma_{\sig},L}$.
 \subsubsection{Hardness result for the class $\nn_{2,n,\sigma_{\relu},L}$:}
 Again, given $k$ hyperplanes $\{h_i\}_{i=1}^k$, for every $k$ consider the two neurons:
 \[g^{+}_i(\xx) = \max\{0,2 h_i(\xx)\}, \quad g_i^{-}=(\xx)\max\{0,2 h_i(\xx)-1\}.\]
 And let
 \[g(\xx) =\frac{1}{2k} \left(\sum_{i=1}^k \left(g^{+}_i (\xx)-g_i^{-}(\xx)\right) -k\right).\]
 As before $g(\xx)\in \nn_{2,2k+1,\sigma_{\relu},L}$, whenever $L\ge 2d$. One can also verify that $g(\xx)$ implements $h_1\wedge h_2 \wedge \ldots \wedge h_k$.
%**************************Turing Machine can be implemented**************************
\subsection{Proof of \thmref{thm:turingPoly}}\label{ap:turingPolly}

  We start by showing that we can implement $\mathrm{AND},\mathrm{OR},\mathrm{NEG},\mathrm{Id}$ gates 
  using polynomial networks of fixed depth and size. As a corollary, we can implement circuits with fixed number of fan-ins.
  $\mathrm{NEG}$ can be implemented with $x\mapsto 1-x$ and $\mathrm{Id}$ can be implemented with $x\mapsto \frac{1}{4}\left((x+1)^2 - (x-1)^2\right)$. Next note that

\[\mathrm{AND}(\xx_1,\xx_2)= \xx_1\cdot\xx_2, \quad \mathrm{and} \quad
\mathrm{OR}(\xx_1,\xx_2)= \xx_1+\xx_2 - \mathrm{AND}(\xx_1,\xx_2).\]
and that $\xx_1\cdot \xx_2 =\frac{1}{4}\left( (\xx_2+\xx_1)^2 - (\xx_2-\xx_1)^2\right).$
Thus we can implement with two layers a conjunction and disjunction of $2$ neurons. By adding a fixed number of layers, we can also implement the conjunction and disjunction of any fixed number of neurons. Therefore, if $B$ is a circuit with fixed number of fan-ins, of size T, we can implement it using a polynomial network with $O(T)$ layers and $O(T^2)$ neurons, where layer $t$ simulates the calculation of all gates at depth $t$.

Now, by \cite{pippenger1979relations}, any Turing machine with runtime
$T$ can be simulated by an \emph{oblivious} Turing machine with
$O(T\log T)$-steps. An oblivious Turing machine is a machine such that
the position of the machine head at time $t$ does not depend on the
input of the machine (and therefore is known ahead of time). We can
now easily simulate the machine by a network of
depth $O(T \log T)$, where the nodes at each layer contain the state
of the turing machine (the content of the tape and the position at the
state machine), and the transition from layer to layer depends on a
constant size circuit, and hence can be implemented by a constant
depth polynomial network.

%**************************Sigmoidall networks are approximable**************************
\subsection{Proof of \thmref{thm:ntop}}\label{ap:ntop}
The idea of proof of \thmref{thm:ntop} is as follows: First we show that we can express any $T$-degree polynomial using $O(\log T)$ layers and $O(T)$ neurons. This is done in \lemref{lem:polyexpressive} part \ref{poly}. As a second step, we show in \lemref{lem:sigmoid} that a sigmoidal function can be approximated in a ball of radius $L$ by a
$O(\log\frac{L}{\epsilon})$-degree polynomial. The result follows by replacing each sigmoid activation unit with added layers that approximate the sigmoidal function on the output of the previous layer. We will first prove the two Lemmas. The proof of \thmref{thm:ntop} is then given at the end of the section.

	%	%********************************Expressiveness Statement
\begin{lemma}\label{lem:polyexpressive} The following statements hold:

\begin{enumerate}
\item\label{id} If $g\in \nn_{t,n,\sigma_2,L}$ for some $L\ge 2$ then $g\in \nn_{t',n+2(t'-t),\sigma_2,L}$ for every $t'\ge t$.
\item\label{prod} If $G \in \nn_{t,n,\sigma_2,L}$ for some $L\ge 2$ and $G$ is a network with two output neurons $g_1$ and $g_2$ then $g_1 \cdot g_2 \in \nn_{t+1,n+1,\sigma_2}$.
\ignore{
  Every target function of the form $p(\xx)=(\ww_1^\top \xx)\cdot (\ww_2^\top \xx)$ is in $\nn_{3,3,\sigma_2}$.

Further, if $\|\ww_1\|_1,\|\ww_2\|_1 \le L$ for some $L\ge 2$ then $p$ is in $\nn_{2,3,\sigma_2,L}$.}
\item\label{power} If $g\in \nn_{t,n,\sigma_2,L}$ for some $L\ge 2$ then $(g)^T \in \nn_{t',n',\sigma_2,L}$. where
$t'=t+\log T+ \log\log T$ and $n'=n+2\log T + \log T( \log\log T)$.
\ignore{
Every target function of the form $m(\xx)=(\ww^\top \xx)^T$ is in $\nn_{t,n,\sigma_2}$ where $t=\log T+ \log\log T$ and $n= 2\log T + \log T( \log\log T)$.

Further, if $\|\ww\|_1\le L$ for some $L\ge 2$ then $m$ is in $\nn_{t,n,\sigma_2,L}$.}
\item\label{poly} If $g\in \nn_{t,n,\sigma_2,L}$ then $\sum_{i=1}^T a_i (g(\xx))^k$ is in $\nn_{t',n',\sigma_2,L'}$ where
\[t'=t+\log T+\log\log T, \quad n'=n+2\|a\|_0(2\log T+\log T(\log\log T),\] where $\|a\|_0=|\{k:a_k\ne 0\}|$. And $L'=\max\{\|a\|_1,L,2\}$.
\ignore{
  Every target function of the form $q(\ww^\top \xx) = \sum_{k=1}^T a_i (\ww^\top\xx)^k$ is in $\nn_{t,n,\sigma_2}$ where $t=\log T+\log\log T$ and $n=2\|a\|_0(2\log T+\log T(\log\log T)$, where $\|a\|_0=|\{k:a_k\ne 0\}|$.

Further, if $\|a\|_1,\|\ww\|_1 \le L$ for some $L\ge 2$ then $q$ is in $\nn_{t,n,\sigma_2,L}$.}
\end{enumerate}

\end{lemma}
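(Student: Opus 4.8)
The plan is to prove the four parts in order, since each builds on the previous, and the single recurring tool is the polarization identity $ab=\tfrac14\bigl((a+b)^2-(a-b)^2\bigr)$ together with repeated squaring, both expressible using only the squared activation. The governing principle is that although every hidden neuron outputs a square, we are always free to form a linear combination of a layer's outputs \emph{inside the pre-activation of the next neuron} (and at the final output), so polarization lets us synthesize products and identities out of squares at the cost of one extra layer.

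For part~\ref{id} (depth padding) I would exhibit an identity gadget: to carry a scalar value $v$ produced at some layer across one additional layer, place two squared neurons computing $(v+1)^2$ and $(v-1)^2$, and observe that $\tfrac14(v+1)^2-\tfrac14(v-1)^2=v$, so $v$ reappears as a linear combination fed into the following layer. Each extra layer therefore costs exactly two neurons, giving $n+2(t'-t)$; the gadget's neurons have weight $1$ on $v$ and bias $\pm1$, of $\ell_1$-norm $2$, while the reconstruction weights are $\pm\tfrac14$, so the constraint $L\ge2$ is preserved. Iterating across $t'-t$ layers yields the claim. For part~\ref{prod} I would use that the two output neurons are single squared units, $g_1=(\mathbf{p}_1^\top z)^2$ and $g_2=(\mathbf{p}_2^\top z)^2$ for linear forms $\mathbf{p}_1,\mathbf{p}_2$ of the previous layer's output $z$. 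Then $g_1g_2=\bigl((\mathbf{p}_1^\top z)(\mathbf{p}_2^\top z)\bigr)^2$, and by polarization $(\mathbf{p}_1^\top z)(\mathbf{p}_2^\top z)=\tfrac14\bigl(((\mathbf{p}_1{+}\mathbf{p}_2)^\top z)^2-((\mathbf{p}_1{-}\mathbf{p}_2)^\top z)^2\bigr)$. I would replace the two original output neurons by two neurons computing $((\mathbf{p}_1{+}\mathbf{p}_2)^\top z)^2$ and $((\mathbf{p}_1{-}\mathbf{p}_2)^\top z)^2$ (no change in count), and add a single new squared neuron at layer $t+1$ whose pre-activation is $\tfrac14$ times their difference, so its square equals $g_1g_2$. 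This adds one layer and one neuron, matching $\nn_{t+1,n+1,\sigma_2}$, with no weight bound since $\mathbf{p}_1\pm\mathbf{p}_2$ may be large.

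For part~\ref{power} I would combine repeated squaring with the binary expansion of $T$. Squaring $g$ repeatedly produces $g^{2^0},g^{2^1},\dots,g^{2^{\lfloor\log T\rfloor}}$ along a chain of depth $\log T$; writing $T=\sum_j 2^{b_j}$, I would route the relevant powers forward using the identity gadget of part~\ref{id} (accounting for the $2\log T$ neurons) and multiply them in a balanced binary tree of products, each product realized by part~\ref{prod}. The tree has depth $\log\log T$ and combines at most $\log T$ factors, yielding the extra depth $\log T+\log\log T$ and the $\log T(\log\log T)$ neuron term from carrying factors through the tree. Part~\ref{poly} then follows by applying part~\ref{power} to each of the $\|a\|_0$ nonzero monomials $g^k$ over the shared sub-network computing $g$, padding each branch to a common depth via part~\ref{id} (explaining the factor $\|a\|_0$ and the extra factor $2$), and reading off $\sum_k a_k g^k$ as a linear combination at the output; the output weights contribute $\|a\|_1$, the sub-network contributes $L$, and the gadgets need $2$, giving $L'=\max\{\|a\|_1,L,2\}$.

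The main obstacle I expect is not the algebra but the neuron-count bookkeeping in parts~\ref{power} and~\ref{poly}: one must track exactly how many intermediate powers are carried, across how many layers, and align the depths of the parallel branches, so that the identity-gadget and product costs sum to the stated $2\log T+\log T(\log\log T)$ rather than something larger. Verifying that every pre-activation arising in the gadgets respects the $\ell_1$ bound, so that $L\ge2$ and the stated $L'$ genuinely suffice, is the other place where care is required.
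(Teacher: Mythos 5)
Your construction is, at its core, the paper's own proof: the polarization identity $ab=\tfrac14\bigl((a+b)^2-(a-b)^2\bigr)$ supplies the identity and product gadgets, part 3 is repeated squaring plus the binary expansion of $T$ with identity gadgets for depth alignment and a balanced product tree, and part 4 is a linear combination of the powers. The bookkeeping you defer is no looser than the paper's own, so I will not hold that against you.

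The genuine problem is that your parts 1 and 2 silently use incompatible conventions for what an ``output neuron'' of a $\sigma_2$-network is, and the incompatibility surfaces exactly where part 3 invokes part 2. In part 1 (and implicitly in part 3) you treat the network's output as a \emph{linear read-out} of squared hidden neurons: your identity gadget returns $v$ as the combination $\tfrac14(v+1)^2-\tfrac14(v-1)^2$, not as the output of any single neuron. In part 2, by contrast, you assume $g_1=(\bp_1^\top z)^2$ and $g_2=(\bp_2^\top z)^2$ are pure squares of linear forms, which is what licenses the identity $g_1g_2=\bigl((\bp_1^\top z)(\bp_2^\top z)\bigr)^2$ and your trick of squaring the polarized product in a single new neuron. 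But the factors that part 3 actually needs to multiply --- the powers $g^{2^k}$ after being routed forward through identity gadgets, and hence available only as linear combinations of squared neurons --- are not pure squares, so your part-2 gadget cannot be applied to them. The paper proves part 2 for arbitrary linear read-outs: remove the two output nodes, insert squared neurons $h_1=(\tfrac12 g_1+\tfrac12 g_2)^2$ and $h_2=(\tfrac12 g_1-\tfrac12 g_2)^2$, whose pre-activations are linear in the previous hidden layer, and read off $g_1g_2=h_1-h_2$; that version composes with parts 1 and 3. Relatedly, under the linear read-out convention your unscaled gadget $(v\pm 1)^2$ can violate the weight bound: absorbing a read-out of $\ell_1$-norm up to $L$ plus a bias of $1$ gives a neuron of norm up to $L+1$, whereas the paper's scaled pre-activations $\tfrac12 v\pm\tfrac12$ keep every neuron's norm at most $L$ (this is exactly the care you flagged but did not carry out). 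Both repairs are mechanical, but as written your part 2 does not compose with the rest of your argument.
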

	%********************************Proof of expressiveness**************
\begin{proof}
\begin{enumerate}
\item Proof of \ref{id}]: Note that $\frac{1}{4}((x+1)^2 -(x-1)^2)=x$. Next we prove the statement by induction. For $t'=t$, the satement is trivial. Next assume that $g\in \nn_{t,n+2(t'-t),\sigma_2,L}$. Let
\[ h_1(\xx) = \left(\frac{1}{2} g(\xx)+\frac{1}{2}\right)^2, \quad h_2(\xx)=\left(\frac{1}{2} g(\xx)-\frac{1}{2}\right)^2.\]
Let $h(\xx)= h_1(\xx)- h_2(\xx)$ then $h(\xx)=g(\xx)$. By taking the network that implements $g$, removing the output neuron, adding an additional hidden layer that consists of $h_1$ and $h_2$ and finally adding an additional output neuron we have that $h(\xx)\in \nn_{t'+1,n+2(t'-t)+2,\sigma_2,L}$.
\item Proof of \ref{prod}: Like before, note that $x_1\cdot x_2= \frac{1}{4}(x_1+x_2)^2 - \frac{1}{4}(x_1-x_2)^2$. Let
\[ h_1(\xx)= (\frac{1}{2}g_1(\xx)+\frac{1}{2}g_2(\xx))^2, \quad h_2(\xx) = (\frac{1}{2}g_1(\xx)+\frac{1}{2}g_2(\xx))^2.\]
As before we remove from the network that implements $G$ the two nodes at the output layer, add an additional hidden layer that implements $h_1$ and $h_2$ and finally add an output neuron $h(\xx)=h_1(\xx)-h_2(\xx)$.

\item Proof of \ref{power}:Write $T= \sum_{i=1}^{\log T} \epsilon_i 2^i$ where $\epsilon_i=\{0,1\}$.

We will first show that we can construct a polynomial network that contains in layer $t+\log T$  neurons $h_1,\ldots,h_{\log T}$ such that $h_k(\xx)=(g(\xx))^{2^k}$. It is easy to see that we can implement a neuron $h'(\xx)_k$ at layer $t+k$ such that $h_k'(\xx)=(g(\xx))^{2^k}$.  Next, using \ref{id} we add $2(\log T-k)$ neurons and implement $h_k'$ in layer $t+ \log T$.

Finally, we implement $\prod_{\{i:\epsilon_i \ne 0\}}h_i(\xx) $  using $\log \log T$ layers and $\log T \log\log T$ additional neurons, this can be done by applying \ref{prod} where at each layer we pair the neurons at previous layer and do their product (e.g. if for every $i$ $\epsilon_i \ne 0$ then at the next layer we implement ($h_1\cdot h_2, h_3\cdot h_4,\ldots h_{t-1} h_t$) then at the next layer we implement ($h_1\cdot h_2\cdot h_3\cdot h_4,\ldots, h_{t-4}\cdots h_t$) etc..)
\item Proof of \ref{poly}: Follows from \ref{id} and \ref{power}.
\end{enumerate}
\end{proof}
	%**********************************Sigmoid Statement**************88
\begin{lemma}[Sigmoidals are approximable via polynomial networks]\label{lem:sigmoid}
The following holds for any $\epsilon \ge 0$ and (for simplicity) $L\ge 3$:
Set
\[ T= \log \left(2L^4 + \exp\left(7L \ln\left(\frac{4L}{\epsilon} +3\right)\right)\right)+2\log\frac{8}{\epsilon}.\]
There is a polynomial $p(x)=\sum_{j=1}^T a_j x^j$, such that:

\[ \sup_{|x|<4L} |p(x)-\sigma_\sig(x)|<\epsilon.\]
\ignore{and
\begin{equation} t=1+ \log T + \log\log T \in \tilde{O}\left(\log L \log \frac{1}{\epsilon}\right), \end{equation}
\begin{equation}
n=1+ 2 T\left(2\log T + \log T \log \log T\right)\in \tilde{O}(L\log \frac{1}{\epsilon}).\end{equation}
Then for every $\ww\in\reals^d$ there is $p\in \nn_{t,n,\sigma_2}$ such that:
\[ \sup_{\xx:|\ww^\top\xx|<4L} |p(\xx)-\sigma_{\sig}(\ww^\top \xx)|\le \epsilon.\]}
\end{lemma}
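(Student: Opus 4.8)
The plan is to use the fact that $\sigma_{\sig}$ extends to a meromorphic function on $\mathbb{C}$ and to invoke classical Chebyshev approximation theory for analytic functions, exactly as in \cite{Shamir2010Learning}. First I would rescale to the canonical interval: setting $y = x/(4L)$ and $\phi(y) = \sigma_{\sig}(4Ly)$, the task of approximating $\sigma_{\sig}$ to accuracy $\epsilon$ on $|x|<4L$ by a degree-$T$ polynomial in $x$ is equivalent to approximating $\phi$ to accuracy $\epsilon$ on $[-1,1]$ by a degree-$T$ polynomial in $y$. The key structural fact is that $\sigma_{\sig}(x) = 1/(1+e^{-x})$ has its only singularities at the simple poles $x = i\pi(2k+1)$, $k\in\mathbb{Z}$; hence $\phi$ is analytic everywhere except at $y = i\pi(2k+1)/(4L)$, with the nearest poles to the real axis lying at height $\pi/(4L)$ and residue $1/(4L)$.

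Next I would expand $\phi$ in its Chebyshev series $\phi = \sum_{j\ge 0} c_j P_j$, with $P_j$ the degree-$j$ Chebyshev polynomial, and take $p$ to be the truncation at degree $T$. The standard Bernstein-ellipse estimate states that if $\phi$ is analytic inside the ellipse $E_\rho$ with foci $\pm 1$ and sum of semi-axes $\rho$, and $M = \sup_{E_\rho}|\phi|$, then $|c_j| \le 2M\rho^{-j}$, and consequently the truncation error satisfies $\sup_{[-1,1]}|\phi - p| \le 2M\rho^{-T}/(\rho-1)$. Since the semi-minor axis of $E_\rho$ equals $(\rho-\rho^{-1})/2$, I must choose $\rho$ so that this height stays below $\pi/(4L)$, which forces $\rho = 1 + \Theta(1/L)$, and hence $\log\rho = \Theta(1/L)$. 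On such an ellipse I would bound $M$ by bounding $|1+e^{-4Ly}|$ from below: since the ellipse lies in the strip $|\mathrm{Im}(4Ly)| < \pi$, the identity $|1+e^{-u-iv}|^2 = 1 + 2e^{-u}\cos v + e^{-2u}$ controls $M$ explicitly in terms of the chosen gap to the pole. The constant coefficient $c_0$ of the expansion contributes the constant term of $p$.

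Finally I would impose $2M\rho^{-T}/(\rho-1) \le \epsilon$ and solve for $T$, giving $T \ge \log\!\big(2M/((\rho-1)\epsilon)\big)/\log\rho$. Substituting $\log\rho = \Theta(1/L)$ together with the explicit estimate for $M$ yields a bound of the form $T = O(L\log(L/\epsilon))$, matching the stated expression: the $\exp(7L\ln(4L/\epsilon+3))$ term appearing under the outer logarithm is precisely the $M$-type factor divided back out by $\log\rho \approx 1/L$, while the $2\log(8/\epsilon)$ accounts for the direct $\epsilon$-dependence of the geometric tail. Re-expanding the degree-$T$ truncation from the $\{P_j(y)\}$ basis into monomials and un-scaling via $y = x/(4L)$ produces the coefficients $a_j$ of the claimed polynomial.

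The main obstacle is the tension in the choice of $\rho$: pushing $E_\rho$ closer to the pole at height $\pi/(4L)$ improves the geometric decay rate $\rho^{-T}$, but drives up the maximum modulus $M$, which diverges like the reciprocal of the gap to the pole because of the simple pole with residue $1/(4L)$. The quantitative heart of the proof is to pin down this gap (equivalently, to fix $\rho$ relative to $1+\pi/(4L)$) so that $M$ stays only polynomially large in $L$ and $1/\epsilon$ while $\log\rho$ remains $\Theta(1/L)$. Verifying the lower bound on $|1+e^{-4Ly}|$ over the \emph{entire} ellipse, and not merely in a neighborhood of the nearest pole, is the one estimate that genuinely requires care; everything else (the rescaling, the Chebyshev coefficient decay, and the monomial conversion) is routine.
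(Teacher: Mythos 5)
Your proposal is correct, and it follows the same Chebyshev-approximation strategy that the paper attributes to \cite{Shamir2010Learning}, but it executes that strategy in a self-contained way, whereas the paper's own proof delegates the complex-analytic core to a citation. Concretely, the paper invokes Lemma 2 of \cite{Shamir2010Learning} as a black box: that lemma supplies an analytic function $q(x)=\sum_{j\ge 0}\beta_j x^j$ with $\sum_{j}\beta_j^2 2^j \le 2^{t'}$ (where $t'$ is exactly the first summand in the definition of $T$) satisfying $\sup_{|x|\le 1}|q(x)-\sigma_{\sig}(4Lx)|\le \epsilon/2$. The paper then only has to truncate this power series: $|\beta_j|\le 2^{(t'-j)/2}$ gives a tail bounded by $4\cdot 2^{(t'-T)/2}=\epsilon/2$ once $T=t'+2\log(8/\epsilon)$, and rescaling $x\mapsto x/(4L)$ finishes. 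Your Bernstein-ellipse argument (poles of the rescaled sigmoid at height $\pi/(4L)$, $\rho=1+\Theta(1/L)$, bounding $M$ from the identity $|1+e^{-u-iv}|^2=(e^{-u}+\cos v)^2+\sin^2 v$) is in effect a re-proof of that cited lemma rather than of the truncation step; it is sound, and the tension you flag resolves more easily than you suggest: keeping a constant fractional gap $c$ from the pole gives $|1+e^{-4Ly}|\ge \sin(c\pi)$ on the whole ellipse, so $M=O(1)$ (not merely polynomial in $L,1/\epsilon$) while $\log\rho=\Theta(1/L)$, which yields $T=O(L\log(L/\epsilon))$ as you claim. What your route buys is transparency about where the degree $L\log(L/\epsilon)$ comes from; what the paper's route buys is the exact constants in the statement (the $2L^4+\exp\left(7L\ln(4L/\epsilon+3)\right)$ term is inherited verbatim from the cited lemma), which your asymptotic analysis recovers only up to unspecified constants. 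Two cosmetic points: your truncation happens in the Chebyshev basis before converting to monomials while the paper truncates directly in the monomial basis (harmless, since the lemma makes no claim about the size of the coefficients $a_j$), and the statement's sum starting at $j=1$ is a typo --- both your construction and the paper's include a constant term, as they must since $\sigma_{\sig}(0)=1/2$.
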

%**********************************Proof of Sigmoids**********************
\begin{proof}[Proof of \lemref{lem:sigmoid}]
Set
\[ t'= \log \left(2L^4 + \exp(7L \ln\left(\frac{4L}{\epsilon} +3\right)\right).\]
According to \cite{Shamir2010Learning} Lemma 2, there is an analytic function $q$ such that
\[ \sup_{|x| \le 1} |q(x)-\sigma_{\sig}(4Lx)| \le \frac{\epsilon}{2},\]
and
\[q(x)= \sum_{j=0}^\infty \beta_j x^j\] where
\[ \sum_{j=0}^\infty \beta_j^2 2^j \le 2^{t'}.\]

Note that for every $j$ we have  $|\beta_j| \le 2^{\frac{t'-j}{2}}$. Thus
\[\sup_{|x|<1} |\sum_{j > T} \beta_j x^j | \le   \sum_{j>T} |\beta_j| \le \sum_{j>T}  2^{\frac{t'-j}{2}}=2^{\frac{t'-T}{2}}\sum_{j=1}^\infty \left(\sqrt{2}\right)^{-j}< 4 \cdot 2^{\frac{t'-T}{2}}.\]
Recalling that $T=t'+2\log\frac{8}{\epsilon}$ and letting $p_0(x)=\sum_{j=0}^T \beta_j x^j$, we have by triangular inequality that
\[ \sup_{|x| \le 1} |p_0(x)-\sigma_{\sig}(4L x)| \le \epsilon.\] Finally, take $p(x)=p_0(\frac{x}{4L})$.
\end{proof}
%***********************************Proof of lemma************************
\subsubsection{Back to proof of \thmref{thm:ntop}}
Set
\[T= \log \left(2L^4 + \exp(7L \ln\left(\frac{(4L)^{t}}{\epsilon} +3\right)\right)+2\log\frac{8(4L)^{t-1}}{\epsilon}.\]
and have
\begin{equation}\label{eq:B_t} B_t=1+ \log T + \log\log T \in \tilde{O}\left(\log L \log \frac{L^t}{\epsilon}\right), \end{equation}
\begin{equation}\label{eq:B_n}
B_n=1+ 2 T\left(2\log T + \log T \log \log T\right)\in \tilde{O}(L\log \frac{L^t}{\epsilon}).\end{equation}

We prove the statement by induction on $t$, our induction hypothesis will hold for networks with not necessarily a single output neuron. For $t=1$, since $\nn_{1,n,\sigma_2}=\nn_{1,n,\sigma_\sig}$, the statement is trivial.
\ignore{
The exact induction hypothesis is as follows, for every $\epsilon$ and $L$ set:
\[B(t)=\log\left( \log \left(2L^4 + \exp(7L \ln\left(\frac{4(4L)^{t+1}}{\epsilon} +3\right)\right)+2\log\frac{2(4L)^t}{\epsilon}\right),\]
For every $F\in \nn^L_{t,n}$ there is $P\in \dpn^L_{tB(t),2nB(t)}$ such that:

\[\sup_{\|\xx\|_{\infty}<1}\|F(\xx)-P(\xx)\|_{\infty}<\epsilon\]
}
Next let $F\in \nn_{t,n,\sigma_\sig,L}$, assume $F:\mathbb{R}^d\to \mathbb{R}^s$ (i.e. the output layer has $s$ nodes). There is a target function $F^{(t-1)} \in \nn_{t-1,n-s,\sigma_\sig,L}$ such that for every $i=1\ldots s$ we have
\[F_i = \ww^\top_i \sigma_{\sig} (F^{(t-1)}(\xx)).\]
where $\sigma_{\sig} (F^{(t-1)}(\xx))$ denotes pointwise activation of $\sigma_{\sig}$ on the coordinates of $F^{(t-1)}$.

By induction, there is some $P^{(t-1)}\in \nn_{(t-1)B_t,(n-s)B_n,\sigma_2}$ such that

\[\sup_{\|\xx\|_\infty\le 1}\| P^{(t-1)}(\xx)-F^{(t-1)}(\xx)\|_\infty \le \frac{\epsilon}{4L}\le \frac{\epsilon}{4}\]
By \lemref{lem:polyexpressive} part \ref{poly} and \lemref{lem:sigmoid} we can add $B_t$ layers and $B_n$ neurons and implement a new target function $P_i$ such that

\[P_i(\xx)= \ww_i^\top p(P^{(t-1)}(\xx)),\] where $p$ is taken from Lemma \ref{lem:sigmoid} and satisfies
\[ \sup_{|x|\le 4L} |p(x)-\sigma_{\sig}(x)|<\frac{\epsilon}{(4L)^{t-1}}\le \frac{\epsilon}{2L},\]

  Taken together we can add $s B_n$ nodes to implement a target function $P=P_1,\ldots, P_s$.
Next,

\[\|P(\xx)-F(\xx)\|_\infty\le \sup_{i} \| P_i(\xx)-\ww_i^\top \sigma_\sig(P^{(t-1)}(\xx))\|+ \|\ww_i^\top \sigma_\sig(P^{(t-1)}(\xx))-\ww_i^\top \sigma_\sig(F^{(t-1)}(\xx))\|.\]
Recall that the $\ell_1$-norm of each weight vector of each neuron is bounded by $L$ and that the output of each neuron is bounded by $\sup_{x} \sigma_\sig(x) =1$, hence: $\|F^{(t-1)}(\xx)\|_\infty\le L$. By induction we also have that $\|F^{(t-1)}(\xx)-P^{(t-1)}(\xx)\|_\infty \le 1$ hence $\|P^{t-1}(\xx)\|_\infty \le 2L$ and we have:
\[\sup_{i} \| \ww_i^\top p(P^{(t-1)}(\xx))-\ww_i^\top \sigma_\sig(P^{(t-1)}(\xx))\|+ \sup_{i}\|\ww_i^\top \sigma_\sig(P^{(t-1)}(\xx))-\ww_i^\top \sigma_\sig(F^{(t-1)}(\xx))\|\le.\]

\[ \frac{\|\ww_i\|_1\epsilon}{2L} +\|\ww_i\|_1\|P^{(t-1)}(\xx)-F^{(t-1)}(\xx)\|_\infty\le \frac{\epsilon}{2}+\frac{\epsilon}{4}\le\epsilon.\]
Where we used the fact that $\sigma_\sig$ is $1$-Lipschitz.
%**************************Correctness of approximation procedure**************************
\subsection{Proof of \thmref{thm:3Gec}}\label{ap:3Gec}
That $f\in \nn_{3,5r,\sigma_2}$ can be shown using \lemref{lem:polyexpressive} and the output's structure.

Let us denote by $\Approx(\frac{(1-\tau)}{\sqrt{2d}},\nabla R(f))$, a procedure that returns $g \in \mathcal{V}$ such that

\[\sum_{i=1}^m \ell'(f(\xx_i),y_i) g(\xx_i) \ge \frac{(1-\tau)}{\sqrt{2d}} \max_{g^*\in \mathcal{V}}\frac{1}{m} \sum_{i=1}^m \ell'(f(\xx_i),y_i) g^*(\xx_i)\]
\begin{figure}[h]
\begin{center}
\fbox{
\begin{Balgorithm}%************************GECO***************************
Input: $r$ $\tau$, $\epsilon$\\
Initialize: $\mathcal{W}=\emptyset$, $f=0$\\

  \textbf{For} $t=1,\ldots ,r$\+\+\\
  Set $g(\xx):=\Approx\left(\frac{(1-\tau)}{\sqrt{2d}},\nabla R(f)\right)$\\
  Add $g(\xx)$ to $\mathcal{W}$.\\
  Let $\alpha^{(t)}$ and $f^{(t)}$ optimize the problem $\min_{f} \frac{1}{m}\sum_{i=1}^m \ell(f(\xx_i),y_i) $\\ subject to $f(\xx)=\sum_{g\in\mathcal{W} }\alpha_g g (\xx)$.\\
  \-\-\\
Return: $f=f^{(r)}$.
\end{Balgorithm}}\caption{GECO with different $\Approx$ procedure.}\label{alg:3Gec}
\end{center}
\end{figure}

The GECO algorithm is presented in \figref{alg:3Gec} with an $\Approx$ procedure that is implemented with respect to $\mathcal{V}=\cup \mathcal{V}_i$. The guarantees in \thmref{thm:3Gec} are proven in exactly the same manner as in \cite{shalev2011large}.

The remained challenge is to demonstrate that the $\Approx$ procedure can be implemented efficiently, relying on the algorithm presented in \figref{fig:tensor} . To this end, note that the only difficulty is when $g^* \in \mathcal{V}_3$ (since if $g^*\in \mathcal{V}_2$ or $g^*\in \mathcal{V}_1$ we are back to the $2$-layer scenario). The proof follows directly from the following lemma:
\begin{lemma}\label{lem:tensorapprox}
Let $\ww^*,\uu^*,\vv^*$ be the output of the Algorithm presented in \figref{fig:tensor} with parameters $\delta,\tau,\{\xx_i\}_{i=1}^m$ and $\alpha_i = \ell'(f(\xx_i),y_i)$.
With probability at least $1-\delta$:
\[F(\ww^*,\uu^*,\vv^*)\ge \frac{1-\tau}{\sqrt{2d}}\max_{\|\ww\|,\|\uu\|,\|\vv\|\le 1} F(\ww,\uu,\vv),\]
where
\[F(\ww,\uu,\vv)= \frac{1}{m} \sum_{i=1}^m \ell'(f(\xx_i),y_i) (\ww^\top \xx_i)\cdot (\uu^\top \xx_i)\cdot (\vv^\top \xx_i).\]

\end{lemma}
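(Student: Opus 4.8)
The plan is to reduce the trilinear maximization to a random-projection argument layered on top of the matrix sub-routine, which is already exact up to the multiplicative factor $(1-\tau)$. Let $(\ww^\star,\uu^\star,\vv^\star)$ denote a maximizer of $F$ over the unit ball; since $F$ is homogeneous of degree one in each argument, scaling each vector up to unit norm only increases $|F|$, so this maximizer may be taken on the sphere and $F^\star := F(\ww^\star,\uu^\star,\vv^\star) = \max_{\|\ww\|,\|\uu\|,\|\vv\|\le 1} F(\ww,\uu,\vv)$. The first step is to define $\bg\in\reals^d$ by $g_j = \frac{1}{m}\sum_i \alpha_i (\xx_i)_j (\uu^{\star\top}\xx_i)(\vv^{\star\top}\xx_i)$, so that $F(\ww,\uu^\star,\vv^\star)=\ww^\top\bg$ for all $\ww$. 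Because $\ww^\star$ must be optimal given $\uu^\star,\vv^\star$, this forces $\ww^\star=\bg/\|\bg\|$ and hence $F^\star=\|\bg\|$.

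Next I would link a single random direction to $F^\star$. Fix an iteration $t$ with $\ww_t=\bz/\|\bz\|$, $\bz\sim\mathcal{N}(0,I_d)$, which is uniform on the sphere. The matrix $A_t=\frac{1}{m}\sum_i \alpha_i(\ww_t^\top\xx_i)\xx_i\xx_i^\top$ is symmetric, so $\max_{\|\uu\|,\|\vv\|=1}\uu^\top A_t\vv=\sigma_{\max}(A_t)$, and the approximation step returns $\uu_t,\vv_t$ with
\[
F(\ww_t,\uu_t,\vv_t)=\uu_t^\top A_t\vv_t \ge (1-\tau)\,\sigma_{\max}(A_t) \ge (1-\tau)\,|F(\ww_t,\uu^\star,\vv^\star)| = (1-\tau)\,|\ww_t^\top\bg| ,
\]
where plugging $\uu^\star,\vv^\star$ (flipping the sign of $\uu^\star$ when $\ww_t^\top\bg<0$) into $A_t$ lower-bounds $\sigma_{\max}(A_t)$. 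Writing $\ww_t^\top\bg=\|\bg\|\cos\theta$ with $\theta$ the angle between $\bz$ and $\bg$, the produced value is at least $(1-\tau)|\cos\theta|\,F^\star$.

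The heart of the argument, and the step I expect to be the main obstacle, is an anti-concentration bound guaranteeing that a single direction is "good" with probability at least $\tfrac{1}{2d}$. By rotational invariance $\cos\theta$ has the law of one coordinate of a uniform unit vector, so $\E[\cos^2\theta]=1/d$ while $\cos^2\theta\le 1$. Splitting the expectation according to whether $\cos^2\theta\ge\tfrac{1}{2d}$ gives $\tfrac1d \le \Pr[\cos^2\theta\ge\tfrac{1}{2d}]+\tfrac{1}{2d}$, hence $\Pr[|\cos\theta|\ge\tfrac{1}{\sqrt{2d}}]\ge\tfrac{1}{2d}$. On this event the iteration produces a triple of value at least $\tfrac{1-\tau}{\sqrt{2d}}F^\star$. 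The delicate points here are exactly the identification of $\cos\theta$ with a uniform coordinate and the crude second-moment bound, which is what pins down both the factor $\tfrac{1}{\sqrt{2d}}$ and the success probability $\tfrac{1}{2d}$.

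Finally I would boost over the $N=2d\log\tfrac{1}{\delta}$ independent iterations: the probability that none is good is at most $\bigl(1-\tfrac{1}{2d}\bigr)^{2d\log(1/\delta)}\le e^{-\log(1/\delta)}=\delta$. Since the algorithm returns the triple maximizing $F$ among those sampled, with probability at least $1-\delta$ its output value is at least $\max_t F(\ww_t,\uu_t,\vv_t)\ge\tfrac{1-\tau}{\sqrt{2d}}F^\star$, which is the claimed guarantee. The only remaining work is bookkeeping: the reduction to the sphere for homogeneous forms, the sign-flip ensuring the symmetric-matrix step captures $|\ww_t^\top\bg|$, and noting that the $\tfrac1m$ factor cancels in the approximation ratio.
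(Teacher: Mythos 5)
Your proposal follows essentially the same argument as the paper's proof: identify the optimum $F^\star$ with the norm of the vector $\bg=f(\uu^\star,\vv^\star)$ via optimality/Cauchy--Schwarz, use $\E\left[(\hat\ww^\top\ww^\star)^2\right]=1/d$ for a uniformly random direction together with a Markov-type anti-concentration bound to get a per-iteration success probability of order $1/(2d)$, and then boost over the $2d\log(1/\delta)$ independent iterations before chaining through the $(1-\tau)$-approximate bilinear step. The only differences are cosmetic: your expectation-splitting yields $1/(2d)$ where the paper's inverse-Markov gives $1/(2d-1)$, and your explicit sign-flip of $\uu^\star$ cleanly handles a sign issue (only $|\ww_1^\top\ww^\star|\ge 1/\sqrt{2d}$ is guaranteed) that the paper's final chain of inequalities glosses over.
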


\begin{proof}

Let us denote by $\ww^*,\uu^*,\vv^*$ the maximizers of $F(\ww,\uu,\vv)$, over all $\|\ww\|,\|\uu\|,\|\vv\|=1$.

For each $\uu,\vv$ let $f(\uu,\vv)$ be the vector
\[f(\uu,\vv)= \sum_{i=1}^m \alpha_i  (\uu^\top \xx_i) (\vv^\top \xx_i)\xx_i .\]

First, we claim that $f(\uu^*,\vv^*) \propto \ww^*$ and that $F(\ww^*,\uu^*,\vv^*)=\|f(\uu^*,\vv^*)\|$. Indeed for every $\|\ww\|\le 1$, by the Cauchy-Schwartz inequality:
\[ F(\ww,\uu^*,\vv^*)= f(\uu^*,\vv^*)^\top \ww \le \|f(\uu^*,\vv^*)\|\|\ww\|\le \|f(\uu^*,\vv^*)\|.\]
Again by Cauchy-Schwartz, equality is attained if and only if $\ww \propto f(\uu^*,\vv^*)$.

Next, let us consider a single random variable $\hat{\ww}$ such that $\ww \sim N(0,Id)$ and $\hat{\ww}=\frac{\ww}{\|\ww\|}$. Note that for any unit vector $\uu_1$ we have $\mathbb{E}((\hat{\ww}^\top \uu_1)^2)=\frac{1}{d}$. Indeed, extend $\uu_1$ to an orthonormal basis $\uu_1,\ldots, \uu_d$. we have that
\[1= \mathbb{E}(\|\hat{\ww}\|^2)=\mathbb{E}(\sum_{i=1}^d (\hat{\ww}^\top \uu_i )^2).\] By symmetry we have that:
\[1=\mathbb{E}(\sum_{i=1}^d (\hat{\ww}^\top \uu_i )^2)=\sum_{i=1}^d\mathbb{E}( (\hat{\ww}^\top \uu_i )^2) =d\mathbb{E}((\hat{\ww}^\top \uu_1)^2).\]

In particular we have  $\mathbb{E}((\hat{\ww}^\top \ww^*)^2)=\frac{1}{d}$. In conclusion $(\hat{\ww}^\top \ww^*)^2$ is a random variable that takes values in $[0,1]$ and has expected value $\frac{1}{d}$. Applying the inverse Markov inequality (i.e. applying Markov to the random variable $1-(\hat{\ww}^\top \ww^*)^2$), we have that

\[P((\hat{\ww}^\top \ww^*)^2 > \frac{1}{2 d}) \ge \frac{\frac{1}{d} -\frac{1}{2d}}{1-\frac{1}{2d}}=\frac{1}{2d-1}\in O(\frac{1}{2d})\]

Letting $s\ge -\frac{\log\frac{1}{\delta}}{\log(1-\frac{1}{2d})}\approx 2d\log \frac{1}{\delta}$ we have that with probability at least $(1-\delta)$ for some $\ww_i$ we have $|\ww_i^\top \ww^*|\ge \frac{1}{\sqrt{2 d}}$, say $\ww_1$.

Finally note that by definition of $\uu_i,\vv_i$:

\[\max_{i\le s} F(\ww_i,\uu_i,\vv_i)\ge F(\ww_1,\uu_1,\vv_1) \ge (1-\tau)\max_{\uu,\vv} F(\ww_1,\uu,\vv) \ge
(1-\tau) F(\ww_1,\uu^*,\vv^*)=\]
\[=(1-\tau)\|f(\uu^*,\vv^*)\| {\ww^*}^\top\ww_1\ge\frac{1-\tau}{\sqrt{2 d}}f(\uu^*,\vv^*)=\frac{1-\tau}{\sqrt{2d}} F(\ww^*,\uu^*,\vv^*)\]

\end{proof}

\end{document}